\newcommand{\seg}{{\upharpoonright}}
\newtheorem{theorem}{Theorem}
\newtheorem{definition}{Definition}
\newtheorem{lemma}{Lemma}
\theoremstyle{definition}
\newtheorem{example}[theorem]{Example}
\providecommand{\ol}[1]{\overline{#1}}
\providecommand{\N}{\mathbb{N}}
\providecommand{\mc}[1]{\mathcal{#1}}
\providecommand{\Ra}{\Rightarrow}
\title{Computable universal online learning}
\author{%
  Dariusz Kalociński
    \\
  Institute of Computer Science\\
  Polish Academy of Sciences\\
  \texttt{d.kalocinski@ipipan.waw.pl} \\
  \And
  Tomasz Steifer \\
  Institute of Fundamental Technological Research\\
  Polish Academy of Sciences\\
  \texttt{tsteifer@ippt.pan.pl} \\
}
\begin{document}

\maketitle

\begin{abstract}
Understanding when learning is possible is a fundamental task in the theory of machine learning. However, many characterizations known from the literature deal with abstract learning as a mathematical object and ignore the crucial question: when can learning be implemented as a computer program? We address this question for universal online learning, a generalist theoretical model of online binary classification, recently characterized by Bousquet et al.~(STOC´21). In this model, there is no hypothesis fixed in advance; instead, Adversary---playing the role of Nature---can change their mind as long as local consistency with the given class of hypotheses is maintained. We require Learner to achieve a finite number of mistakes while using a strategy that can be implemented as a computer program. We show that universal online learning does not imply computable universal online learning, even if the class of hypotheses is relatively easy from a computability-theoretic perspective. We then study the agnostic variant of computable universal online learning and provide an exact characterization of classes that are learnable in this sense. We also consider a variant of proper universal online learning and show exactly when it is possible. Together, our results give a more realistic perspective on the existing theory of online binary classification and the related problem of inductive inference.
\end{abstract}

\section{Introduction}\label{sec:introduction}
The quest to understand the fundamental limits of machine learning has led to the development of powerful theoretical frameworks, such as PAC learning or multi-armed bandits. These frameworks provide a high-level abstraction of machine learning, capable of capturing a broad class of learning problems within a single unified theoretical model. A good theoretical model should at least be able to say when learning is possible and when it is not? Take, for example, the online version of binary classification. The seminal papers by \cite{littlestone_learning_1988} and \cite{angluin1988queries} studied the question of when Learner has a strategy that guarantees a uniform bound on mistakes. \citet{littlestone_learning_1988} famously gave a combinatorial characterization---now known as the Littlestone dimension---of hypothesis classes that are learnable with a uniform mistake bound. Under such a restriction, one can equivalently assume that Adversary fixes one true function in advance. Recently, \citet{bousquet_theory_2021} extended this work to a more general adversarial setting, where Learner makes finitely many mistakes (but possibly not uniformly bounded) and where Adversary can adapt dynamically. This setting is in line with real-world scenarios where uniform bounds are not always available. We call it \emph{universal online learning}.\footnote{Please note that  \citet{bousquet_theory_2021} describe two related settings: one probabilistic (universal learning) and one adversarial (universal online learning). This paper is uniquely concerned with the latter \citep[see ][Section~3]{bousquet_theory_2020}.}

The notion of the Littlestone dimension was generalized to ordinal numbers by \cite{bousquet_theory_2021}, who used it to give a characterization of universal online learnability: a class is universally online learnable if and only if it has an ordinal Littlestone dimension. More recently, the framework of universal online learning was extended to multiclass classification with bandit feedback \citep{hanneke2025universal}.

Although the work of \citet{bousquet_theory_2021} provides deep theoretical insights, it leaves open a critical question: when can such learning be implemented by an actual computer program? Having an appropriate Littlestone dimension is sufficient to guarantee the existence of a learner understood as an abstract object---a mathematical function. However, such a function may be computationally hard or even uncomputable, meaning that it cannot be implemented on a computer.

In this work, we bridge this gap by introducing computability constraints into the universal online learning framework, asking when a class of hypotheses admits a computable learning scheme that guarantees a finite number of mistakes. 

Recently, different authors have begun to study combinatorial characterizations from the point of view of computability theory, introducing computable analogs of PAC learning \citep{agarwal2020learnability, sterkenburg2022characterizations, delle2023find}, robust PAC learning \citep{gourdeau2024computability}, and uniform online learning \citep{hasrati2023computable}. In all cases, special attention was given to the so-called recursively enumerably representable ($RER$) classes---classes specified by a program that enumerates, one by one, all the functions in the class. In particular, $RER$ classes of finite Littlestone dimension are always learnable by a computable learner with a uniform mistake bound \citep{assos2023online, kozachinskiy2023simple}. 

The standard notion of computability makes sense for countable domains such as the set of all natural numbers. For this reason, in this paper we limit our attention to universal online learning of hypotheses defined on $\mathbb{N}$. While this restriction may appear limiting, it is motivated by practical considerations. Indeed, all practical machine learning takes place on computers, which inherently operate over countable domains such as images or binary strings. From the point of view of computability, learnability on the natural numbers is sufficiently general to model any domain relevant to practical applications.\footnote{Despite our efforts to make the paper accessible to the machine learning community, the reader may occasionally wish to consult an introductory textbook on computability theory \citep{cutland_computability_1980}, or a more advanced work \citep{odifreddi_classical_1989}.}

\section{Main results}\label{sec:contributions}
We begin with a brief outline of the main results of the paper. This is followed by a detailed presentation of the framework, accompanied by examples and a discussion of the key concepts involved. The formal statements of all new theorems appear in Sections~\ref{sec:setting}--\ref{sec:proper}.

The first important observation (Section~\ref{sec:setting}, Proposition~\ref{prop:RERuncomputable}) states that there exist $RER$ classes that are universally online learnable but cannot be learned by any computable learner. This contrasts sharply with the case of $RER$ classes with finite Littlestone dimension, as discussed in the introduction.

In Section \ref{sec:agnostic}, we investigate the agnostic variant of computable universal online learning. In agnostic learning, Adversary is no longer restricted to choose hypotheses that are consistent with the class. When the class is universally online learnable, one can guarantee a sublinear expected regret for agnostic learning \citep{littlestone1994weighted,cesa1997use,hutter2005adaptive,blanchard2022contextual,hanneke2025universal}. We show that this no longer holds when computability is required. In Theorem~\ref{thm:separation}, we give an example of a class that is computably universally online learnable in the realizable case, but for which there cannot be a computable universal agnostic learner. On the positive side, Theorem~\ref{thm:total equals agnostic} gives an exact characterization of classes that are agnostically online learnable by a computable learner. Furthermore, in Theorem \ref{thm:total equals partial for RER}, we show that the standard equivalence between realizable and agnostic learning holds if we restrict our attention to $RER$ classes. 

Finally, in Section \ref{sec:proper}, we study a notion of \emph{proper learning} adapted for universal online learning. Here, we say that a learner is proper if it outputs a realizable function while playing against realizable Adversary. Again, in Theorem \ref{thm:proper} we give a precise characterization of proper learning in the computable regime. We also show that proper learning is harder than non-proper learning. In particular, Theorem \ref{separation theorem proper} gives an example of a class for which computable universal online learners exist, but none of them can be proper in this sense. In fact, such examples can be found even among $RER$ classes.

\subsection{Setting}\label{sec:setting}

Our model is given by the following game, played between Learner and Adversary. The game starts with a predefined hypothesis class $\mc H$, which is a set of (total) functions from some domain $X$ to $\{0, 1\}$. As announced in the introduction, $X = \N$. The game continues for infinitely many rounds. In each round, Adversary first gives Learner an element $x\in X$. Learner then makes a prediction $\hat{y}\in\{0, 1\}$ for the label of $x$. After the prediction, Adversary reveals the "true" label $y\in \{0, 1\}$ of $x$. If $y\neq \hat{y}$, this means that Learner has made a mistake.

Learner’s goal is to make as few mistakes as possible. Otherwise, their moves are unrestricted in any other way. However, Adversary is expected to maintain consistency in their answers with $\mc H$. That means that at each step, the labels revealed so far must agree with some function $h \in \mc H$. Adversary is malicious in a very strong sense---they can change their mind; that is, they can change $h \in \mc H$ according to which their labels are given. An Adversary who maintains consistency with the class $\mc H$ is called \emph{realizable}.

We use the term \emph{learner} (without capitalization) to refer to a strategy according to which Learner could play. Formally, a \textit{learner} is a partial function that maps elements of $(\mathbb{N} \times {0,1})^{<\omega} \times \mathbb{N}$ to $\{0,1\}$; namely, a function that, given a sample $S$ and a number $x$, predicts a binary label for $x$. A sequence $(x_t, y_t)_{t < N}$, where $N \in \N \cup \{\infty\}$, $x_t \in \mathbb{N}$, and $y_t \in \{0,1\}$, is \textit{realizable} by $\mathcal H$ (or $\mc H$-realizable) if, for every $T < N$, there exists $h \in \mathcal H$ such that $(x_t, y_t)_{t=0}^T$ is consistent with $h$, i.e., $h(x_t) = y_t$ for $t = 0, 1, \dots, T$.

\begin{definition}[universal online learning]\label{def universal learning}
We say that a class $\mc H$ is universally online learnable if there is a learner $L$ such that for every $\mc H$-realizable sequence $(x_t,y_t)_{t=0}^\infty$, there exists $m$ such that for every $n>m$ we have $L(S,x_{n+1})=y_{n+1}$, where $S=(x_t,y_t)_{t=0}^n$. If this strategy can be implemented as a computer program, we say that $\mc H$ is computably universally online learnable. 
\end{definition}
Equivalently, $\mc H$ is universally online learnable if there is a learner $L$ such that for every $\mc H$-realizable sequence $R$, there exists $M \in \N$ such that $L$ makes at most $M$ mistakes on $R$.
In contrast, a class $\mc H$ is universally online learnable with a uniform mistake bound if there is a learner $L$ and a mistake bound $M \in \N$ such that for every $\mc H$-realizable sequence, $L$ makes at most $M$ mistakes on that sequence. Clearly, every class that is universally online learnable with a uniform mistake bound is also universally online learnable. The reverse does not hold \citep[see][Section~3.4]{bousquet_theory_2020}.

In the uniform case, it does not matter whether Adversary is allowed to change the target function. If the Littlestone dimension is finite, a uniform mistake bound can be achieved even against adaptive Adversary. Conversely, if the Littlestone dimension is infinite, Adversary need not be adaptive to force mistakes on arbitrary sequences. For this reason, many authors prefer to assume that the target function is fixed in advance \citep[see, e.g.,][]{shalev-shwartz_understanding_2014}. Indeed, this is also how the setting was originally defined by \citet{littlestone_learning_1988}. The flexibility of Adversary becomes relevant only once the requirement of uniformity is dropped.

\begin{example}\label{example1}
Consider the class of all singletons, that is, the class of hypotheses on $\mathbb{N}$ that have exactly one nonzero label. Even though this class does not contain a function with all zeros, Adversary is allowed to play as if this were their "true" hypothesis. This is because every finite set of numbers all labeled with zeros is consistent with one of the functions in the class. This class is uniformly online learnable with a uniform mistake bound.\end{example}

\begin{example}\label{example2}
Consider the class containing all the functions that have a finite number of nonzero labels. If Adversary cannot change their mind, this class is online learnable but with a non-uniform mistake bound. There are only countably many hypotheses, and we can check each one by one. However, in our setting, Adversary can effectively play whatever sequence of labels they want and still maintain consistency with the class. Therefore, this class is not universally online learnable.\end{example}

\begin{example}\label{example3} Imagine that the role of Adversary is played by a large language model (LLM). The generated sequence of tokens is not chosen in advance but produced online, and it is always consistent with the set of all possible outputs of this LLM. We can think about different random choices produced by the model as moves of Adversary. At a high level of abstraction, the theory of universal learning tells us when it is possible to predict (the binary representation of) this sequence of tokens with only a finite number of mistakes. \end{example}

We will assume that our classes of hypotheses contain only total computable functions on the natural numbers, i.e., functions that can be implemented by a computer program which halts on every input. This is justified since computable universal online learning is only possible when all the hypotheses in the class are total computable (cf. Lemma~\ref{fullrealizablecomp}). The classes of Examples~\ref{example1}--\ref{example3} all contain only computable functions and all are $RER$, that is, there is a program that enumerates a list of programs computing all and only the functions from the class (see Definition~\ref{def RER}). 
Some $RER$ classes are also decidably representable ($DR$), which means that they can be presented as a decidable set of programs. Not every $RER$ class is $DR$. The following classes are all $DR$ and computably universally online learnable (but not with a uniform mistake bound): all functions with finite support, thresholds on $\N$, and thresholds on $\mathbb Z$ \citep[see Examples 3.8-3.10 in][]{bousquet_theory_2020}.

The above examples motivate the notion of the closure $\ol{\mc H}$ of a class $\mc H$. The closure contains all functions that can be effectively used by Adversary who is realizable---that is, all the functions that are locally consistent with $\mc H$ after a finite number of steps. 

\begin{definition}
    A function $g$ belongs to $\ol{\mc H}$ if there exists a sequence of functions $h_0,h_1,\ldots$ from $\mc H$ and a sequence of natural numbers $n_0 < n_1 < \dots$ such that, for every $i \in \N$, $h_i(x)=g(x)$ for all $x < n_i$.
\end{definition} Effectively, a mind-changing Adversary behaves as if they had selected a true hypothesis from $\ol{\mc H}$ in advance and then never changed their mind---this observation is straightforward to prove and is stated more precisely in Section~\ref{sec:technical}.

It was recently discovered that any $RER$ class that is online learnable with a uniform mistake bound is also learnable by a computable learner \citep{assos2023online, kozachinskiy2023simple}. However, this implication fails when the requirement of a uniform mistake bound is dropped.

\begin{restatable}{proposition}{RERuncomputable}
\label{prop:RERuncomputable}
There exists a $RER$ class that is universally online learnable but not computably universally online learnable. In fact, such a class can even be $DR$.
\end{restatable}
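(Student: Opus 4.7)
The plan is to construct an explicit $DR$ class built from the halting problem $K=\{e:\Phi_e(e)\text{ halts}\}$. Fix an effective enumeration $a_0,a_1,a_2,\ldots$ of $K$ and, for each $n\in\N$, let $h_n$ be the characteristic function of $\{a_0,\ldots,a_{n-1}\}$. Each $h_n$ has finite support and is computed by a canonical program (enumerate $K$ until $n$ elements appear, then perform a table lookup); by fixing a syntactic template for these programs, the corresponding set of indices is decidable, so $\mc H=\{h_n:n\in\N\}$ is $DR$.

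Next, I would show that $\mc H$ is universally online learnable. A direct calculation gives $\ol{\mc H}=\{h_n:n\in\N\}\cup\{\chi_K\}$, because the $h_n$ form a chain under inclusion of supports, and any $g\in\ol{\mc H}$ must have support equal to an initial segment of the enumeration of $K$. An uncomputable learner with oracle access to $K$ then suffices: given a sample $S$, set $N_1=1+\max\{i:(a_i,1)\in S\}$ and $N_2=\min\{i:(a_i,0)\in S\}$ (with empty max/min read as $0$ and $\infty$), predict $0$ on queries outside $K$, and for $x=a_i$ predict $1$ when $i<N_2$ and $0$ otherwise. A mistake can occur only in the optimistic case $N_1\le i<N_2$ with true label $0$, and each such mistake strictly decreases $N_2$. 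When Adversary plays $h_m$, realizability forces $N_2\ge m$ throughout, so only finitely many mistakes happen; when Adversary plays $\chi_K$, no $0$-label on $K$ is ever revealed, $N_2$ remains $\infty$, and no mistake occurs. Alternatively, one can invoke the Bousquet et al.\ characterization and note that $\mc H$ is a chain with a single limit point, hence of ordinal Littlestone rank at most $\omega$.

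Finally, I would rule out computable learners by a recursion-theoretic diagonalization. Suppose some computable $L$ universally online learns $\mc H$ and consider the $\mc H$-realizable sequence in which Adversary plays $\chi_K$ and asks the queries $x_t=t$ in order; this is realizable since $\chi_K\in\ol{\mc H}$. By assumption, there exists $T$ with $L(S_t,t)=\chi_K(t)$ for every $t>T$, where $S_t=((0,\chi_K(0)),\ldots,(t-1,\chi_K(t-1)))$. Hardcode the finite string $\chi_K(0),\ldots,\chi_K(T)$ into a program and recursively define $\tilde\chi(t)=L(\tilde S_t,t)$ for $t>T$, where $\tilde S_t$ is assembled from previously computed $\tilde\chi$-values. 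A routine induction yields $\tilde\chi\equiv\chi_K$, making $\chi_K$ total computable and contradicting the undecidability of the halting problem.

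The most delicate step is the mistake-count analysis for the oracle learner; the other parts reduce to a straightforward closure calculation and a clean diagonal. The conceptual point is that even a tame $DR$ chain of computable functions can have an uncomputable limit in its closure, invisible to any computable strategy that must predict from finite samples alone.
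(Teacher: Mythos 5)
Your proposal is correct, and it reaches the result by a genuinely different and more concrete route than the paper. The paper instantiates $\mathcal H$ from a computable binary tree with countably many infinite paths, at least one uncomputable (a folklore fact about $\Pi^0_1$ classes), takes $\mathcal H = \{\sigma 0^\infty : \sigma \in T\}$, gets an uncomputable learner simply by enumerating the countable $\overline{\mathcal H}$ and following the first hypothesis consistent with the sample, and rules out computable learners by a single appeal to Lemma~\ref{fullrealizablecomp} (every element of $\overline{\mathcal H}$ would be computable). You instead build an explicit $DR$ chain $\{h_n\}$ of initial-segment indicators of an enumeration of $K$, verify directly that $\overline{\mathcal H} = \{h_n\} \cup \{\chi_K\}$, exhibit an explicit $K$-oracle learner with a hand-crafted mistake analysis, and prove non-learnability by a bootstrap diagonalization: hardcode a finite prefix of $\chi_K$ and use the learner itself to compute the rest. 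The two constructions hinge on the same phenomenon --- an uncomputable function in the closure of a $DR$ class of computable functions --- and your diagonalization is in effect the proof of Lemma~\ref{fullrealizablecomp} unfolded for the specific target $\chi_K$ (the paper's proof of that lemma routes through the Projection Lemma, which gives a single locking sample $S$ with $L(S,\cdot)=\chi_K$; your version recursively assembles the whole initial-segment sequence, which requires, and you correctly note, that $L$ is defined on all these realizable samples). What your approach buys is self-containment: no appeal to facts about $\Pi^0_1$ classes, and a concrete class that readers can visualize. What the paper's buys is generality and economy: the tree construction is one line once the folklore is granted, and the non-computability is a one-line consequence of a lemma used elsewhere. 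Two small cosmetic points: the oracle learner could be replaced by the paper's simpler ``enumerate $\overline{\mathcal H}$ and follow the first consistent hypothesis'' strategy, avoiding the $N_1,N_2$ bookkeeping (and $N_1$ is in fact never used in the prediction rule, only in the analysis); and when you write ``empty max read as $0$'' you presumably intend the convention that makes $N_1 = 0$ on an empty sample, though this has no bearing on the argument.
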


The proof of this result, given in Section~\ref{proof.RERuncomputable}, relies on the fact that even if $\mathcal{H}$ contains only computable functions, its closure may contain functions that are uncomputable.
Similarly, natural examples of universally online learnable but not computably universally online learnable classes can be extracted from the existing literature \citep[see, e.g.,][]{cenzer_chapter_1998}, where sets of solutions to a computably presented problem are represented as infinite paths in a computable tree. See Appendix~\ref{secapp:natural example} for details.

\subsection{Agnostic learning}\label{sec:agnostic}  
Our definition of computable universal online learning only asks that the computable learner behaves in a certain way \emph{on realizable samples}. In particular, this means that such a learner can be undefined on non-realizable samples. Think about a program that halts on realizable inputs but might go into an infinite loop if fed a different input. For instance, we can have a computable learner for the class of singletons from Example~\ref{example1}, with an additional instruction saying: once you see two numbers labeled with $1$, go into a loop. This learner is still good in the realizable case, because Adversary will never show us a function with more than one nonzero label.

The situation changes once we try to move to the agnostic setting. We adopt a relatively weak requirement of sublinear regret, which is known to be equivalent to realizable universal online learning, even in the multiclass setting under bandit feedback \citep{hanneke2025universal}. Such a property usually requires randomization. The notion of computable learning naturally generalizes to randomized computations. A randomized program is just a program that has access to an additional source of random bits sampled with probability $1/2$. We say that a randomized computable function $f$ is almost total if it is total with probability one with respect to its internal randomness. To be precise, with probability~$1$, once we fix the internal randomness, the computation halts on each input. We can now give the definition of universal agnostic online learnability. In the remainder of the paper, it is simply called \emph{agnostic learnability}.

\begin{definition}[agnostic universal online learnability]\label{def:agnostic-learnability}
Let $L$ be a learner, and let $\mc H$ be a class of hypotheses. We say that $L$ agnostically universally online learns $\mc H$, or that $L$ is an agnostic universal online learner for $\mc H$, if 
    for every sequence $(x_t,y_t)_{t=0}^\infty$ \[\mathbb{E}\left(\min_{h\in\mc H}\left(\sum_{0<t\leq n}h(x_t)\neq y_t\right) - \sum_{0<t\leq n}\left( L((x_i,y_i)_{i=0}^{t-1},x_{t})\neq y_{t}\right)\right) \]
    grows sublinearly in $n$, with the expectation taken relative to the internal randomness of the learner.\\
    Moreover, we say that $\mc H$ is computably agnostically universally online learnable if there exists a universal agnostic online learner for $\mc H$ which is realized as an almost total randomized computable function.
\end{definition}

\begin{example}\label{example4} Consider a class of all decision trees of depth $3$. In a realistic scenario, small decision trees cannot perfectly capture the complexity of the data. However, there are reasons for using small decision trees for classification, such as interpretability. Agnostic learning with small decision trees requires that, while we cannot perfectly classify the data, we can still do almost as well as the best decision tree.\end{example}

In agnostic learning, Learner’s strategy must be defined for all possible inputs, not only the realizable ones. In fact, the standard methods for transforming a realizable learner into an agnostic one require evaluating the (old) learner on non-realizable samples. Indeed, a standard approach is as follows: start with a method of prediction with expert advice (for a countable set of experts), like Weighted Majority Algorithm with nonuniform weights \cite{littlestone1994weighted}, Follow the Perturbed Leader \cite{hutter2005adaptive} or some version of Hedge algorithm \cite{blanchard2022contextual} and apply it to the class of experts defined by simulating the realizable learner on different inputs.\footnote{\citet{ben2009agnostic} introduced this idea for the agnostic learning with respect to Littlestone classes.} In turn, having a computable agnostic learner we can transform it into a computable (non-randomized) realizable learner. The following theorem, proved in Section \ref{proof.total equals agnostic}, formalizes this idea and gives two characterizations of agnostic learnability in the computable regime.

\begin{restatable}{theorem}{TotalequalsAgnostic}
\label{thm:total equals agnostic}
Let $\mathcal H$ be an arbitrary class of hypotheses. The following are equivalent:
\begin{enumerate}
    \item $\mc H$ is computably universally online learnable (in the realizable setting) by a total learner.\label{item:agno-total}
    \item There exists a $RER$ class $\mc Z$ such that $\ol{\mc H} \subseteq \mc Z$.\label{item:agno-enum}
    \item $\mc H$ is computably agnostically learnable.\label{item:agno-agno}
\end{enumerate}
\end{restatable}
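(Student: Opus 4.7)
The plan is to establish the three-way equivalence via the cycle (1) $\Rightarrow$ (2) $\Rightarrow$ (3) $\Rightarrow$ (1). The structural condition (2) will serve as the pivot: it yields both a realizable and an agnostic computable learner through an enumeration-based construction, and conversely each form of computable learning is meant to force the existence of a $RER$ cover of $\ol{\mc H}$.

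For (1) $\Rightarrow$ (2), let $L$ be a total computable realizable learner. I would define, for every finite sample $S$, the total computable function $h_S$ by $h_S(x) = y$ whenever $(x, y) \in S$ and $h_S(x) = L(S, x)$ otherwise. Since $L$ is total computable, $h_S$ is uniformly computable in $S$, so the class $\mc Z = \{h_S : S \text{ is a finite sample}\}$ is $RER$. To prove $\ol{\mc H} \subseteq \mc Z$, given $g \in \ol{\mc H}$, I would iteratively build $S$ starting from $S_0 = \emptyset$ and, at stage $k$, appending $(x_k, g(x_k))$ where $x_k$ is the least $x$ on which $L(S_k, x) \neq g(x)$. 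The resulting query sequence is $\mc H$-realizable because $g \in \ol{\mc H}$ and each finite prefix of $g$ is $\mc H$-consistent; at every stage $L$ is forced to make a mistake. Since $L$ makes only finitely many mistakes on any realizable sequence, the construction halts at some $S_K$ with $h_{S_K} = g$.

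For (2) $\Rightarrow$ (3), an enumeration $\mc Z = \{h_1, h_2, \ldots\}$ of a $RER$ cover of $\ol{\mc H}$ provides a computably presented countable family of experts. Feeding this family into a standard countable-expert online algorithm (e.g., Weighted Majority with non-uniform weights $w_i \propto 2^{-i}$, Follow the Perturbed Leader, or a Hedge variant) yields an almost-total randomized computable learner whose expected regret against the $i^*$-th expert is sublinear in $n$. Since $\mc H \subseteq \ol{\mc H} \subseteq \mc Z$, this yields a computable agnostic learner for $\mc H$.

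The delicate direction is (3) $\Rightarrow$ (1). A naive derandomization seems problematic: sublinear expected regret on a realizable sequence does not, by itself, guarantee finitely many mistakes under any single fixed random seed. My plan is to route through (2), using the agnostic learner $L$ to manufacture a $RER$ class $\mc Z' \supseteq \ol{\mc H}$ and then applying the oldest-consistent-expert rule (predict using the least-indexed $h_j \in \mc Z'$ consistent with the sample so far) to recover a total deterministic computable realizable learner. The natural candidate is to enumerate all pairs $(S, r)$, with $S$ a finite sample and $r$ a finite random seed, and consider the total computable functions $h_{S, r}(x) := L(S, x; r)$, using the convention of outputting $0$ whenever $L$ tries to read beyond the bits of $r$. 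The main obstacle, which I expect to be the core of the proof, is showing $\ol{\mc H} \subseteq \{h_{S, r}\}$: for every $g \in \ol{\mc H}$ one must locate a specific pair $(S, r)$ whose finite-seed simulation reproduces $g$ pointwise. I expect this step to exploit both the sublinear expected regret of $L$ on the self-revealing realizable sequence $(t, g(t))_{t \geq 0}$ and the almost-totality of $L$, ideally by an effective search that converges once a sufficiently long initial segment of $g$ has been revealed and $L$'s internal randomness has been probed along a computable schedule.
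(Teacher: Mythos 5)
Your cycle $(1)\Rightarrow(2)\Rightarrow(3)\Rightarrow(1)$, with $(2)$ as pivot, matches the paper's structure, and your arguments for $(1)\Rightarrow(2)$ and $(2)\Rightarrow(3)$ are essentially the paper's. The first inlines what the paper isolates as the Projection Lemma (Lemma~\ref{lem:projection lemma}); the "force a mistake, stop after finitely many" construction is exactly that lemma's proof, and the minor cosmetic change of hard-coding labels from $S$ into $h_S$ is harmless. The second is the same countable-expert reduction the paper formalizes as Lemma~\ref{lemma experts to agnostic}.

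The gap is in $(3)\Rightarrow(1)$, precisely where you flag it. You correctly decide to route through $(2)$ and then use the oldest-consistent-hypothesis rule, but the proposed enumeration $\{h_{S,r}\}$ with a \emph{fixed finite seed} $r$ cannot be made to cover $\ol{\mc H}$. For a fixed $r$, only the queries $x$ on which $L(S,x;\cdot)$ halts within $|r|$ random bits are actually decided by $L$; on all other $x$ your convention forces $h_{S,r}(x)=0$. Since $g\in\ol{\mc H}$ is defined on all of $\N$ and a single finite seed cannot supply enough randomness for infinitely many queries in general, there is no reason any pair $(S,r)$ will reproduce $g$ pointwise. The object you need is not a fixed-seed simulation but a \emph{deterministic} total learner $L'$ built by a truncated majority vote: on the $k$-th query, simulate $L$ until the computation halts on at least $1-(k+1)^{-2}$ of the probability mass of seeds (possible because $L$ is almost total) and output the majority prediction over that mass. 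The schedule $1-(k+1)^{-2}$, with $\sum_k (k+1)^{-2}<\infty$, is the crucial missing idea: it makes $L'$ total and computable, and the summability lets you compare the truncated measure $P'$ to the true one. Combining sublinear expected regret with dominated convergence and Markov's inequality then shows $L'$ makes infinitely many correct predictions on every realizable sequence, so the Projection Lemma applies to $L'$ and $\mc Z' := \{L'(S_n,\cdot) : n\in\N\}$ is the desired $RER$ cover. Without the per-round increasing-precision truncation and the accompanying measure-theoretic estimate, your step $(3)\Rightarrow(2)$ does not go through.
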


Without computability constraints, universal online learning implies agnostic learning \citep{hanneke2025universal}. The following theorem shows that this implication does not hold in the computable regime. The proof is given in Section~\ref{proof.sep1} and Appendix~\ref{fullproof.sep2}.

\begin{restatable}{theorem}{Separation}
\label{thm:separation}
There exists a class $\mathcal H$ that is computably universally online learnable but not computably agnostically learnable.
\end{restatable}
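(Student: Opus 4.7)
The plan is to construct $\mc H$ directly so that it admits a partial computable universal online learner while its closure $\ol{\mc H}$ is not contained in any RER class; by Theorem~\ref{thm:total equals agnostic}, the latter rules out any computable agnostic learner, so the separation follows.

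The class is built on a block skeleton. Partition $\N$ into disjoint infinite blocks $B_0, B_1, \ldots$ and use $B_e$ as a diagonalization slot against $W_e$, the $e$-th c.e.\ index set viewed as a candidate RER enumeration. For each pair $(e,s)$ put into $\mc H$ a total computable function $h_{e,s}$ supported on $B_e$, carrying a canonical marker encoding $(e,s)$ on a reserved sub-block of $B_e$; away from the marker, $h_{e,s}$ is a stage-$s$ snapshot of a target function $\bar f_e$ being built in parallel against $W_e$. By arranging that the approximations on $B_e$ stabilize pointwise, the extension $\bar f_e^{B_e}$ of $\bar f_e$ by $0$ off $B_e$ lies in $\ol{\mc H}$.

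The partial computable learner $L$ then works as follows. Given a sample, $L$ scans for nonzero labels. If none has appeared in any block, it predicts $0$ (always consistent with the zero function, which is in $\ol{\mc H}$). If nonzero labels lie in a unique block $B_e$, $L$ tries to read the stage marker; once a stage $s$ is revealed, $L$ predicts using $h_{e,s}$; if no marker has appeared, the sample must be consistent with $\bar f_e^{B_e}$, so $L$ simulates the priority construction of $\bar f_e$ far enough to answer the query. If the labels contradict the block structure, $L$ is allowed to diverge: such samples are non-realizable, so there is no obligation to return an answer. On any $\ol{\mc H}$-realizable infinite sample, $L$ eventually enters and stays in exactly one of these cases and thus makes only finitely many mistakes.

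The delicate part is ensuring that $\ol{\mc H}$ escapes every RER. If the construction were fully uniform in $e$, the family $\{\bar f_e^{B_e}\}_e$ would itself be RER, and by a standard fixed-point argument one would find an $e^*$ whose diagonalization requirement is unmet; the remedy is a finite-injury priority argument in which the requirement $R_e$ (``$\bar f_e$ differs from every total $\phi_n$ with $n \in W_e$'') acts only when $W_e$ presents an as-yet-undefeated total-looking index. The set of $e$'s whose requirement ever completes is then not c.e., so no single RER can contain all the completed $\bar f_e^{B_e}$'s. The main obstacle is to arrange the priorities so that both sides coexist: each $\bar f_e$ must remain pointwise computable by $L$ (so that the fallback branch works), yet the family must evade uniform enumeration as a whole. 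With standard finite-injury bookkeeping, each requirement is injured only finitely often, the approximations $h_{e,s}$ stabilize on each block, and the remaining verifications---members of $\mc H$ are total computable, $L$ is correct on every realizable sample, and $\ol{\mc H}$ is not contained in any RER---are routine.
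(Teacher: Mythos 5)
Your high-level strategy is the same as the paper's: by Theorem~\ref{thm:total equals agnostic}, it suffices to build $\mc H$ with a partial computable universal online learner such that $\ol{\mc H}$ is contained in no $RER$ class. However, there is a genuine gap in your construction that the appeal to finite injury does not close.

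The issue is a direct tension between the two properties you need. Your learner $L$, once it detects that nonzero labels sit in block $B_e$ and sees no marker, ``simulates the priority construction of $\bar f_e$ far enough to answer the query.'' For $L$ to be a well-defined partial learner on every $\ol{\mc H}$-realizable play, that simulation must succeed, so $\bar f_e$ must be total computable, and in your description it is computed uniformly from $e$. But then $e\mapsto \bar f_e^{B_e}$ is a computable enumeration of total computable functions, so $\{\bar f_e^{B_e}:e\in\N\}$ is $RER$; together with the uniformly computable marker functions $h_{e,s}$ and $0^\infty$, this yields an $RER$ superset of $\ol{\mc H}$ --- exactly what you were trying to preclude. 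Your remark that ``the set of $e$'s whose requirement ever completes is not c.e.'' does not help: an $RER$ enumeration of $\{\bar f_e^{B_e}\}$ need not recognize which requirements acted, it simply lists all of them, and the uniform computability demanded by $L$ hands you precisely such a listing. A related problem is that making $\bar f_e$ ``differ from every total $\varphi_n$ with $n\in W_e$'' is the classical obstruction to diagonalizing against an arbitrary c.e.\ list of possibly partial indices: the resulting $\bar f_e$ is $\Delta^0_2$, not computable, unless $W_e$ is restricted --- at which point you are no longer quantifying over all $RER$ classes.

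The paper sidesteps both problems simultaneously by diagonalizing against total learners rather than against candidate $RER$ enumerations. For each total learner $\varphi_n$ it places into $\mc H$ one ``evil sequence'' $h$ starting with $0^n1$ and satisfying $h(k)\neq\varphi_n\bigl((i,h(i))_{i<k},k\bigr)$ for all $k>n$. This is computed by a partial recursive $e(n,\cdot)$ that is total exactly when $\varphi_n$ is a total learner, and the set of such $n$ is not c.e., so the family evades every $RER$ class (equivalently, no total learner succeeds). The partial learner is then trivial: predict $0$ until a $1$ appears at some position $m$; after that, realizability guarantees the adversary is following the evil sequence of some total $\varphi_n$ with $n\le m$, a finite set the learner can safely search. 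The salvageable kernel of your idea would be to make $\bar f_e$ partial, total only on a non-c.e.\ set of $e$'s, with realizability ensuring the learner never has to compute a divergent one --- but that is essentially the paper's construction, and diagonalizing against the learners $\varphi_e$ directly is cleaner than going through $W_e$.
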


The class given in the proof of Theorem~\ref{thm:separation} is not $RER$. It is not possible to computably enumerate all the hypotheses from this class. In Section~\ref{proof.total equals partial for RER} we prove that once we limit our attention to $RER$ classes, computable universal online learning is again equivalent to agnostic learning:

\begin{restatable}{theorem}{TotalequalsPartialRER}
\label{thm:total equals partial for RER}
Let $\mathcal H$ be a $RER$ class. The following are equivalent:
\begin{enumerate}
    \item $\mc H$ is computably universally online learnable. \label{item:partial}
    \item There exists an $RER$ class $\mc Z$ such that $\ol{\mc H} \subseteq \mc Z$.\label{item:Z}
    \item $\mc H$ is computably universally online learnable by a total learner.\label{item:total}
\end{enumerate}
\end{restatable}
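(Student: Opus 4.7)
The equivalence \textup{(\ref{item:Z})} $\Leftrightarrow$ \textup{(\ref{item:total})} is a consequence of Theorem~\ref{thm:total equals agnostic} applied to the class $\mc H$, and \textup{(\ref{item:total})} $\Rightarrow$ \textup{(\ref{item:partial})} is trivial. The nontrivial content is \textup{(\ref{item:partial})} $\Rightarrow$ \textup{(\ref{item:Z})}: promoting a partial computable universal online learner $L$ for the $RER$ class $\mc H$ to a recursively enumerable cover of $\ol{\mc H}$, using essentially the fact that $\mc H$ admits a uniformly computable enumeration $(h_e)_{e \in \N}$.

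My plan is to enumerate pairs $(\sigma, e) \in \{0,1\}^{<\omega} \times \N$ and, for each, to define a total computable function $f_{\sigma, e}\colon \N \to \{0,1\}$ by the following recursion: set $f_{\sigma, e}(x) = \sigma(x)$ for $x < |\sigma|$; for $x \geq |\sigma|$, recursively compute $f_{\sigma, e}(j)$ for $j < x$, form the sample $S_x = (j, f_{\sigma, e}(j))_{j < x}$, and then dovetail the evaluation of $L(S_x, x)$ with the computation of $h_e(x)$, returning whichever halts first. Totality follows from the totality of each $h_e$, so the resulting class $\mc Z := \{f_{\sigma, e}\}$ is $RER$.

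For the inclusion $\ol{\mc H} \subseteq \mc Z$, fix $g \in \ol{\mc H}$. By hypothesis $L$ learns $g$, so there exists $N$ with $L(S^g_x, x) = g(x)$ for every $x \geq N$, where $S^g_x$ is the natural $g$-labeling of $\{0, \dots, x-1\}$. Setting $\sigma := g \seg N$, it suffices to exhibit $e$ such that the recursion in $f_{\sigma, e}$ reproduces $g$ from $\sigma$---that is, at every $x \geq N$ the internal race is won compatibly with outputting $g(x)$: either $L$ wins and returns $g(x)$, or $h_e$ wins at a position where $h_e(x) = g(x)$. When $g \in \mc H$ we may take $h_e = g$ and the race becomes irrelevant. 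When $g \in \ol{\mc H} \setminus \mc H$, no single $h_e$ agrees with $g$ globally, so $L$ must win on a cofinite tail, and this is the main obstacle: the halting time of $L$ on $S^g_x$ is a total computable but possibly very fast-growing function of $x$, so a naive choice of $e$ need not force $L$ to win. Resolving this requires enriching the enumeration of $\mc H$ with sufficiently padded copies of each hypothesis---so that one can always find an index $e$ whose computation is strictly slower than $L$'s on the relevant $g$-samples---while keeping the enriched enumeration $RER$; this padding construction is the technical crux, and one completes it by indexing the pad through an auxiliary parameter whose evaluation is safely bounded by a computable budget, falling back to a default when the pad's own computation fails to halt within the budget.
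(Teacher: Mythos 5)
Your handling of the implications $(\ref{item:Z})\Leftrightarrow(\ref{item:total})$ (via Theorem~\ref{thm:total equals agnostic}) and $(\ref{item:total})\Rightarrow(\ref{item:partial})$ (trivial) is correct and matches the paper's decomposition. The problem is $(\ref{item:partial})\Rightarrow(\ref{item:Z})$, where your racing construction has a genuine gap that you yourself flag but do not close. For $g\in\ol{\mc H}\setminus\mc H$ you need the race inside $f_{\sigma,e}$ to be won by $L$ at \emph{every} $x\geq N$, since a single win by an $h_e$ that disagrees with $g$ derails the recursion: the sample $S_{x+1}$ then fails to match $S^g_{x+1}$, and there is no reason $L$ should keep outputting $g$ on the drifted samples. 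Your proposed fix---enriching the enumeration with padded copies of the $h_e$ so that some index is slower than $L$ on the relevant $g$-samples---is not carried out, and it is not obvious it can be. The running time of $L(S^g_x,x)$ is a function of $x$ determined by $\sigma$ and the recursion, but it can grow arbitrarily, so any fixed padding scheme on the $h_e$'s can be outpaced; and a padding that depends on $\sigma$ and on $L$'s own running times must still produce a total function even on prefixes $\sigma$ for which the ``no-race'' recursion would diverge, which is precisely the circularity the race was introduced to break. As written, the claim $\ol{\mc H}\subseteq\mc Z$ is not established.

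The paper sidesteps all of this by using the $RER$ hypothesis in a more direct way. Since $\mc H$ is $RER$, the set of finite $\mc H$-realizable samples is computably enumerable: enumerate pairs $(S,i)$ and output $S$ whenever the $i$-th enumerated hypothesis is consistent with $S$. Let $S_0,S_1,\dots$ be such an enumeration. A learner for $\mc H$ is (by convention, cf.\ the discussion preceding Definition~\ref{def:agnostic-learnability}) defined on every realizable $(S,x)$, so each $L(S_n,\cdot)$ is a \emph{total} computable $\{0,1\}$-valued function, and $\mc Z:=\{L(S_n,\cdot):n\in\N\}$ is $RER$ with no race and no padding. Lemma~\ref{lem:projection lemma} then gives, for each $h\in\ol{\mc H}$, a realizable sample $S'$ with $L(S',\cdot)=h$; since $S'$ appears among the $S_n$, this yields $\ol{\mc H}\subseteq\mc Z$. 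Note that the $RER$ hypothesis enters exactly here---to enumerate realizable samples rather than to supply race opponents---which is also why the implication fails for the non-$RER$ class of Theorem~\ref{thm:separation}. Your instinct that $(h_e)_{e\in\N}$ must be used is right, but it should be used to restrict the domain on which you evaluate $L$, not to guard $L$'s halting.
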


\subsection{Proper versus improper learning}\label{sec:proper}
In computational learning theory, \emph{proper learning} usually refers to the setting in which a learning algorithm is restricted to output hypotheses that belong to the class $\mc H$. In the uniform mistake-bound regime, this is equivalent to requiring that the learner output only realizable functions.
In our setting, however, these two notions no longer coincide. The latter is too strong, since Adversary may effectively use a function that does not belong to the class but rather to its closure. Indeed, the standard definition can be satisfied only in the very restrictive case $\mc H = \ol{\mc H}$. Hence, we adopt the second interpretation, as it is the less trivial one.

\begin{definition}[proper learner]\label{def:proper learner}
    We say that a learner $L$ is proper with respect to $\mc H$ if, for every $\mc H$-realizable sample $S$, $L(S,\cdot)$ is $\mc H$-realizable.
\end{definition}

In the above definition, $L(S,\cdot)$ is a total function $L': \N \to \{0,1\}$ of the single variable indicated by~$\cdot$, defined as follows: $L'(x) = L(S,x)$, for all $x \in \N$.  

When it is clear which class we are referring to, we often simply say that a learner is proper, instead of "proper with respect to \dots". For example, in the formulation of the theorem below, "proper" means "proper with respect to $\mc H$". The following characterization is proved is Section~\ref{proof.proper}:

\begin{restatable}{theorem}{ProperLearner}
\label{thm:proper}
For every $RER$ class $\mc H$, $\mc H$ is computably universally online learnable by a proper learner if and only if $\ol{\mc H}$ is a $RER$ class.
\end{restatable}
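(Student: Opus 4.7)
For the $(\Leftarrow)$ direction, suppose $\ol{\mc H}$ is $RER$ via an enumeration $h_0, h_1, \ldots$. The plan is to use the minimum-index consistent learner $L$: on input $(S,x)$, search for the smallest $i$ such that $h_i$ agrees with $S$ and return $h_i(x)$. Each $h_i$ is total computable and $S$ is finite, so consistency checks are decidable and the search terminates whenever $S$ is $\mc H$-realizable. Properness is immediate since $L(S,\cdot)=h_i \in \ol{\mc H}$. For the universal mistake guarantee, every $\mc H$-realizable sequence is consistent with some target $g \in \ol{\mc H}$ (the ``locked target'' observation from Section~\ref{sec:setting}); fixing $k$ with $h_k = g$, the minimum consistent index $i^*_t$ is nondecreasing in $t$ and bounded by $k$, hence stabilizes at some value $i^*$. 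After stabilization, the current hypothesis $h_{i^*}$ cannot contradict a revealed label (that would force the index strictly higher), so $L$ makes no further mistakes.

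For the $(\Rightarrow)$ direction, let $L$ be a proper computable universal online learner for the $RER$ class $\mc H$, and set $\mc Z := \{L(S,\cdot) : S \text{ is } \mc H\text{-realizable}\}$. The plan is to show $\mc Z = \ol{\mc H}$, which makes $\ol{\mc H}$ $RER$ via the following effective enumeration of $\mc Z$: dovetail consistency checks of each finite $S$ against each $h_i$ in the enumeration of $\mc H$ to enumerate the $\mc H$-realizable samples, and for every such $S$ output the program $x \mapsto L(S,x)$, which, by properness, computes a total function in $\ol{\mc H}$. The inclusion $\mc Z \subseteq \ol{\mc H}$ is immediate from properness. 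For the reverse inclusion, fix $g \in \ol{\mc H}$ and suppose toward contradiction that $L(S,\cdot) \neq g$ for every $\mc H$-realizable $S$. Then for every $g$-consistent realizable $S$ there exists some $x_S$ with $L(S,x_S) \neq g(x_S)$. Consider the adversary that, at round $t$ with current sample $S_t$, queries $x_t := x_{S_t}$ and reveals $g(x_t)$. By construction $S_t$ stays $g$-consistent, and any $g$-consistent finite sample is $\mc H$-realizable: for $N$ larger than every query point in $S$, the assumption $g \in \ol{\mc H}$ yields $h \in \mc H$ with $h \seg N = g \seg N$, and this $h$ witnesses realizability. Thus the whole play is $\mc H$-realizable while $L$ errs at every round, contradicting universal online learnability.

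The main obstacle is the reverse inclusion $\ol{\mc H} \subseteq \mc Z$, where properness is indispensable: without the guarantee $L(S,\cdot) \in \ol{\mc H}$ the implicit ``working hypothesis'' of $L$ could lie outside $\ol{\mc H}$, and the diagonal mistake-forcing adversary might then fail to produce a realizable play. The elementary lemma that every $g$-consistent finite sample is $\mc H$-realizable whenever $g \in \ol{\mc H}$ glues the two parts together by ensuring that the adversarial sequence constructed above falls within the scope of the universal online learning guarantee.
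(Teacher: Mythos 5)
Your proof is correct and follows essentially the same route as the paper. For the direction ``$\ol{\mc H}$ is $RER$ $\Rightarrow$ proper learnable,'' you use the minimum-index consistent learner, which is exactly what the paper does (by invoking the construction from the proof of $\eqref{item:Z}\Rightarrow\eqref{item:total}$ in Theorem~\ref{thm:total equals partial for RER}); the only cosmetic difference is that the paper caps the index search at $|S|$ to make the learner total, whereas you allow a partial learner defined only on realizable samples, and both are legitimate here since properness imposes no constraint on non-realizable inputs. For the direction ``proper learnable $\Rightarrow$ $\ol{\mc H}$ is $RER$,'' you form the same class $\{L(S,\cdot) : S\text{ realizable}\}$, use properness plus the Closure Lemma for one inclusion, and for the other inclusion you inline the diagonal mistake-forcing argument that the paper abstracts into its Projection Lemma (Lemma~\ref{lem:projection lemma}); your auxiliary observation that every $g$-consistent finite sample is $\mc H$-realizable for $g\in\ol{\mc H}$ is the paper's Lemma~\ref{small lemma}. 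So the argument is the same in substance, just with the lemmas re-derived in place rather than cited.
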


Trivially, if a class is computably universally online learnable by a proper learner, then it is computably universally online learnable. In Section~\ref{proof.sep2}, we show that the reverse implication does not hold, thereby establishing a separation between proper computable universal online learning and computable universal online learning.

\begin{restatable}{theorem}{SeparationProper}
\label{separation theorem proper}
There exists a $RER$ class that is computably universally online learnable but has no computable proper learner.
\end{restatable}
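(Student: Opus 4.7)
The plan is to combine Theorems~\ref{thm:proper} and~\ref{thm:total equals partial for RER}: I will exhibit an $RER$ class $\mc H$ whose closure $\ol{\mc H}$ is contained in some $RER$ class (so $\mc H$ is computably universally online learnable by Theorem~\ref{thm:total equals partial for RER}) but is itself not $RER$ (so no computable proper learner exists by Theorem~\ref{thm:proper}). Fix a standard enumeration $(W_e)_{e \in \N}$ of c.e.\ sets with stages $W_{e,s}$, and for every pair $(e,s) \in \N^2$ define
\[ f_{e,s}(x) = \begin{cases} 1 & \text{if } e \le x < e + |W_{e,s}|, \\ 0 & \text{otherwise}, \end{cases}\]
that is, $f_{e,s}$ is the sequence $0^e 1^{|W_{e,s}|} 0^\infty$. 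Since $f_{e,s}$ is uniformly computable in $(e,s)$, the class $\mc H := \{f_{e,s} : e, s \in \N\}$ is $RER$.

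Next I will show that $\ol{\mc H} \subseteq \mc Z$, where $\mc Z := \{0^\infty\} \cup \{0^e 1^k 0^\infty : e, k \in \N,\, k \ge 1\} \cup \{0^e 1^\infty : e \in \N\}$ is a manifestly $RER$ class (enumerate all triples $(e, k, \mathrm{type})$). Each element of $\mc H$ contains a single contiguous block of $1$'s, so the prefix tree $T(\mc H)$ has a ``comb'' shape: a branch that leaves the all-zero spine at position $e$ and later returns to $0$ can never produce another $1$. A direct case analysis then shows that every infinite branch of $T(\mc H)$ is either $0^\infty$, $0^e 1^k 0^\infty$ for some $e$ and $1 \le k \le |W_e|$, or $0^e 1^\infty$ for some $e$ with $W_e$ infinite---all of which lie in $\mc Z$. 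Hence Theorem~\ref{thm:total equals partial for RER} yields that $\mc H$ is computably universally online learnable.

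Finally, to show $\ol{\mc H}$ is not $RER$, I will exploit the fact (from the closure analysis) that $0^e 1^\infty \in \ol{\mc H}$ iff $e \in \mathrm{Inf} := \{e : W_e \text{ is infinite}\}$. Suppose toward a contradiction that $\varphi_{f(0)}, \varphi_{f(1)}, \ldots$ is a total computable enumeration of $\ol{\mc H}$. Then
\[ e \in \mathrm{Inf} \iff \exists n \bigl[(\forall x < e)\, \varphi_{f(n)}(x) = 0 \;\wedge\; (\forall x \ge e)\, \varphi_{f(n)}(x) = 1\bigr],\]
which places $\mathrm{Inf}$ in $\Sigma^0_2$---contradicting the classical fact that $\mathrm{Inf}$ is $\Pi^0_2$-complete and hence not $\Sigma^0_2$. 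Theorem~\ref{thm:proper} then gives that $\mc H$ has no computable proper learner. The hard part will be keeping this balance: $\ol{\mc H}$ must be small enough to sit inside an $RER$ class, yet must encode a properly $\Pi^0_2$ condition to escape $RER$-ness. The comb construction threads the needle because each limit branch $0^e 1^\infty$ is itself a trivially computable function, while the question ``which $e$ yield such a limit branch'' is exactly infinitude of $W_e$.
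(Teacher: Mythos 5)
Your proposal is correct and establishes the theorem, but it takes a genuinely different route from the paper's. You build a ``comb'' class whose $e$-th tooth is extended in step with the enumeration of $W_e$, so that the infinite limit branch $0^e 1^\infty$ appears in $\overline{\mathcal{H}}$ precisely when $e \in \mathrm{Inf}$. You then derive non-$RER$-ness of $\overline{\mathcal{H}}$ \emph{abstractly}: any $RER$ presentation of $\overline{\mathcal{H}}$ (by a c.e.\ set of indices of total $\{0,1\}$-valued functions, which is what the paper's Definition~\ref{def RER} and the proof of Theorem~\ref{thm:proper} actually produce) would place $\mathrm{Inf}$ in $\Sigma^0_2$, contradicting $\Pi^0_2$-completeness. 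Finally you invoke Theorems~\ref{thm:total equals partial for RER} and~\ref{thm:proper} as black boxes. The paper instead uses the same comb skeleton but drives tooth $e$ by \emph{diagonalizing directly against $\varphi_e$}: at each stage it checks whether $\varphi_e$'s next prediction on the realizable stub $\gamma_e$ is proper, extending the tooth exactly when $\varphi_e$ predicts $0$, and verifies by hand that every potential proper learner is defeated. Both arguments are sound; yours offloads the diagonalization to the classical hierarchy theorem and so is shorter and arguably more transparent, at the cost of appealing to $\Pi^0_2$-completeness of $\mathrm{Inf}$ and relying on the characterization theorems as a lever. The paper's argument is self-contained and exhibits the defeating Adversary play explicitly, which is more informative about \emph{why} properness fails. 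One small point worth making explicit in your write-up: the $\Sigma^0_2$ bound on the right-hand side of your displayed equivalence requires that each $\varphi_{f(n)}$ is total, so that ``$\varphi_{f(n)}(x)=b$'' is a decidable (not merely $\Sigma^0_1$) matrix; this is exactly what the paper's $RER$ notion delivers (and what the $\Leftarrow$ direction of Theorem~\ref{thm:proper} constructs), but you should state it rather than leave it implicit, since with indices of possibly partial functions the count would rise to $\Sigma^0_3$ and the contradiction would evaporate.
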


\section{Discussion}\label{sec:discussion}
Theorems~\ref{thm:total equals agnostic} and~\ref{thm:proper} provide an exact characterization of agnostic and proper learning in the framework of universal online learning. Together with Theorem~\ref{thm:total equals partial for RER}, this yields an exact characterization of realizable universal online learning for the family of $RER$ classes, which arguably includes all hypothesis classes of practical relevance. It remains an interesting open problem to provide an exact characterization of realizable computable universal online learning for arbitrary classes. A related open question is how to characterize computable online learning with a uniform mistake bound. A recent attempt by \citet{delle2025effective} suggests that this question may be hard.

Proposition~\ref{prop:RERuncomputable}, together with the accompanying discussion, indicates that universally online learnable $RER$ classes without computable learners do exist and can arise in naturally occurring problems. How complex can the learners for such classes be? This question admits a precise computability-theoretic formulation, and its resolution will be presented in a separate manuscript (in preparation).

\section{Proofs}\label{sec:proofs}
For the sake of mathematical rigor, we state certain notions in a more precise way. Programs in a fixed programming language are finite objects and can be indexed by natural numbers. We use the notation $\varphi_n$ to denote the function computed by the program with index $n$ ($n$-th program, or program $n$). Such a function can be partial, that is, undefined on some inputs. We say that a set $Z \subseteq \N$ is computably enumerable (c.e.) if there exists a program that enumerates all (and only) the elements of $Z$. If $z_0,z_1,\dots$ is such an enumeration of $Z$ generated by a program, we call it a computable enumeration of $Z$. In particular, $Z$ can be a set of (indices of) programs. 

\begin{definition}[$RER$ class]\label{def RER}
    $\mc H$ is $RER$ if there exists a c.e. set $H$ (of the indices of programs) such that, for each function $f$ from $\N$ to $\{0,1\}$, $f \in \mc H$ iff there exists $e \in H$ such that $\varphi_e = f$.
\end{definition} Equivalently, $\mc H$ is $RER$ if there exists a c.e. set $H$ such that $\mc H = \{\varphi_e: e \in H\}$.

\subsection{Helpful lemmas}\label{sec:technical}
Before we dive into the proof of the main results, it will be helpful to state some technical lemmas. The proofs of these observations use standard techniques, and we relegate them all to Appendix \ref{appendix helpful lemmas}. 

In our setting, Adversary can ask Learner to label numbers in an arbitrary order, perhaps even omitting some of the numbers completely. Adversary has to maintain consistency of each sample with some hypothesis in the class but we imagine that at each round there might be a different target hypothesis. Lemmas~\ref{small lemma} and~\ref{closure lemma} tell us that equivalently we can assume that Adversary just chooses in advance a single target function from the closure of the class.

\begin{lemma}\label{small lemma} Let $h \in \overline{\mc H}$ and let $(x_t)_{t=0}^\infty$ be any sequence of non-negative integers. Then the sequence $(x_t,h(x_t))_{t=0}^\infty$ is $\mc H$-realizable.
\end{lemma}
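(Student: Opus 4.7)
The plan is to unfold the two definitions and then exhibit, for each finite prefix, an explicit witness in $\mc H$. Fix $h \in \ol{\mc H}$, together with the sequences $(h_i)_{i \in \N}$ from $\mc H$ and $n_0 < n_1 < \cdots$ guaranteed by the definition of the closure, so that $h_i(x) = h(x)$ for every $x < n_i$. Let $(x_t)_{t=0}^\infty$ be an arbitrary sequence of non-negative integers.

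To verify that $(x_t, h(x_t))_{t=0}^\infty$ is $\mc H$-realizable, I need to show that every finite prefix is consistent with some member of $\mc H$. Fix $T \in \N$ and set $M = \max\{x_0, x_1, \ldots, x_T\}$. Since $(n_i)$ is strictly increasing and unbounded, choose $i$ large enough that $n_i > M$. Then for every $t \le T$ we have $x_t \le M < n_i$, so by the closure property $h_i(x_t) = h(x_t)$. Thus $h_i \in \mc H$ witnesses the $\mc H$-realizability of the prefix $(x_t, h(x_t))_{t=0}^T$. Since $T$ was arbitrary, the whole sequence is $\mc H$-realizable.

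There is no real obstacle here: the statement is essentially a bookkeeping consequence of the definitions. The only mild subtlety is noticing that the strict monotonicity of $(n_i)$ makes it cofinal in $\N$, which is exactly what lets us cover the finitely many queries $x_0, \ldots, x_T$ by a single $h_i$.
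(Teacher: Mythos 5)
Your proof is correct, and it is also slightly cleaner than the paper's own. The paper argues via topology (splitting into $h \in \mc H$ and $h \in D(\mc H)$, invoking the derived-set characterization $\ol{\mc H} = \mc H \cup D(\mc H)$ and a Kuratowski-style fact about accumulation points), whereas you work directly from the combinatorial definition of $\ol{\mc H}$ given in the body of the paper. The two are equivalent, but your route avoids the case split (since the definition already covers $h \in \mc H$ by taking $h_i = h$ for all $i$) and avoids citing topology. More importantly, you handle the index bookkeeping more carefully: you take $M = \max\{x_0,\dots,x_T\}$ and choose $i$ with $n_i > M$, which is exactly what is needed because the queries $x_t$ need not be bounded by the prefix length $T$. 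The paper's proof elides this step --- it picks $h_n \in [h \restriction n]$ and asserts that $(x_t,h(x_t))_{t=0}^{n-1}$ is consistent with $h_n$, which is not literally true unless $n$ is first enlarged past $\max\{x_0,\dots,x_{n-1}\}$. So your version makes explicit a small repair that the paper leaves implicit.
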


\begin{lemma}[Closure Lemma] Every $\mc H$-realizable sequence $(x_i, y_i)_{i=0}^\infty$ is consistent with some element of $ \overline {\mathcal H}$. In particular, if an $\mc H$-realizable sequence $(x_i,y_i)_{i=0}^\infty$ is such that the enumeration $x_0,x_1,\dots$ mentions all natural numbers, 
then the function $x_i \mapsto y_i$ is an element of $\ol{\mc H}$.
   \label{closure lemma}
\end{lemma}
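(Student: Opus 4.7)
The plan is a straightforward compactness argument. For each $T$, $\mc H$-realizability supplies some $h_T \in \mc H$ with $h_T(x_t) = y_t$ for all $t \leq T$; from the sequence $(h_T)_{T \in \N}$ I will distill a single function $g : \N \to \{0,1\}$ that witnesses the lemma.

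Form the set
\[ \mc T = \{\sigma \in \{0,1\}^{<\omega} : h_T \seg |\sigma| = \sigma \text{ for infinitely many } T \in \N\}. \]
This is closed under initial segments, hence a subtree of $\{0,1\}^{<\omega}$. Since for every length $m$ there are only $2^m$ possible values of $h_T \seg m$, pigeonhole places at least one string of length $m$ into $\mc T$; so $\mc T$ is infinite, and being binary it has an infinite path $g \in \{0,1\}^\N$ by König's lemma.

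To see $g \in \ol{\mc H}$, I apply the definition of the closure with $n_i = i$ and, as $h_i$, any element of $\mc H$ of the form $h_T$ such that $h_T \seg i = g \seg i$; such a $T$ exists because $g \seg i \in \mc T$. To see that $g$ is consistent with the sample, I fix $i$ and pick $m > x_i$. Since $g \seg m \in \mc T$, some $T \geq i$ satisfies $h_T \seg m = g \seg m$; for this $T$, $h_T(x_i) = y_i$ because $T \geq i$ and $h_T$ is consistent with the first $T+1$ labels, and $h_T(x_i) = g(x_i)$ because $x_i < m$. Hence $g(x_i) = y_i$, as required.

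For the in-particular clause, when the $x_i$ enumerate every natural number, realizability forces $x_i = x_j$ to imply $y_i = y_j$, so the map $x_i \mapsto y_i$ is a well-defined function on $\N$; by the first part it coincides with $g$ everywhere and thus equals the element $g \in \ol{\mc H}$ just constructed. The only mildly nontrivial ingredient is the König/pigeonhole extraction of $g$; the remaining steps are direct unpackings of the definitions of realizability and closure.
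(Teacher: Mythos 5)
Your proof is correct, and it takes a genuinely different route from the paper's, though both ultimately rest on compactness of $2^\omega$. The paper first disposes of the case where the witness set $\{h_0,h_1,\dots\}$ is finite (one witness recurs infinitely often and is itself consistent), and in the infinite case builds a nested chain of closed sets $C_{-1}\supseteq C_0\supseteq C_1\supseteq\cdots$, at each coordinate either pinning agreement with the sample (when that coordinate occurs among the $x_i$) or choosing the bit that keeps the intersection with the witness set infinite; it then extracts the limit $\zeta$ via completeness of $2^\omega$ under the Baire metric. You instead collect all strings that are prefixes of infinitely many witnesses into a binary tree, show the tree is infinite by pigeonhole, and obtain a path $g$ by K\"onig's lemma, then verify membership in $\ol{\mc H}$ and consistency with the sample directly. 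Your phrasing handles both witness cases uniformly, avoids the explicit metric-completeness step, and defers the ``which coordinates are queried'' bookkeeping to the final consistency check rather than baking it into the construction; what the paper's version buys is an explicit, stage-by-stage description of the approximating witnesses. Both are instances of the same compactness extraction, and your unpacking of the closure definition and of the realizability condition is correct, including the well-definedness argument for the in-particular clause.
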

The next lemma states that if $\mc H$ is learnable even in a very weak sense, then there exists a sample on which the learner outputs a given hypothesis from $\ol{\mc H}$. This lemma describes a phenomenon similar to \emph{locking sequences} from the inductive inference theory, first considered by \cite{blum_toward_1975}.
\begin{lemma}[Projection Lemma]\label{lem:projection lemma} Fix a class of hypotheses $\mc H$ and a learner $L$. Suppose that for each $\mc H$-realizable play of Adversary, $L$ predicts infinitely many labels correctly, i.e., for each $h\in \ol{\mc{H}}$  and $X=((x_1,h(x_1)),(x_2,h(x_2),\ldots)$ there are infinitely many $k$ such that $L(S,x_{k+1})=h(x_{k+1})$, where $S$ is the prefix of $X$ of length $k$. Then there exists a sample $S'$ such that $S'$ is consistent with $h$ and $L(S',\cdot) = h$; moreover, such a sample $S'$ is always $\mc H$-realizable.
\end{lemma}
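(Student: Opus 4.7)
The plan is to argue by contradiction, in the spirit of the classical locking-sequence lemma of Blum and Blum. Fix $h \in \ol{\mc H}$ and suppose, for contradiction, that no sample $S'$ of the required form exists for this $h$. Reading $L(S,x) \neq h(x)$ to include the case where $L(S,x)$ is undefined, this assumption unpacks to the following: for every finite sample $S$ consistent with $h$, there is some $x \in \N$ with $L(S,x) \neq h(x)$---otherwise $L(S,\cdot)$ would be a total function equal to $h$ and would serve as the forbidden $S'$.

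Using this, I would inductively build an infinite $\mc H$-realizable play against $h$ on which $L$ is never correct. Start from $S_0 = ()$; given $S_k$ consistent with $h$, pick any $x_{k+1} \in \N$ witnessing $L(S_k, \cdot) \neq h$ and set $S_{k+1} = S_k \cdot ((x_{k+1}, h(x_{k+1})))$. Each $S_k$ remains consistent with $h$, and the resulting infinite sequence $X = ((x_1, h(x_1)), (x_2, h(x_2)), \ldots)$ is $\mc H$-realizable by Lemma~\ref{small lemma}. By construction $L(S_k, x_{k+1}) \neq h(x_{k+1})$ at every round, so $L$ is correct zero times on $X$---directly contradicting the hypothesis that it is correct infinitely often on every $\mc H$-realizable play. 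The ``moreover'' clause then follows from the same lemma: any finite sample $S'$ consistent with $h \in \ol{\mc H}$ is a prefix of an infinite sequence of pairs $(x, h(x))$ which is $\mc H$-realizable, and $\mc H$-realizability is preserved under taking prefixes.

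The only point requiring real care is the treatment of partiality: one must agree that ``$L$ is wrong at round $k+1$'' covers the case where $L(S_k, x_{k+1})$ is undefined, so that the inductive step is always well-defined regardless of whether the failure of $L(S_k,\cdot) = h$ comes from a genuine disagreement or from $L$ simply failing to halt on the query. With this convention fixed, the argument involves no real calculation and the rest is purely formal.
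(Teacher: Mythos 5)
Your proof is correct and follows essentially the same route as the paper's: negate the conclusion for some fixed $h\in\ol{\mc H}$, inductively build an infinite $\mc H$-realizable play of the form $(x_t,h(x_t))_{t}$ on which $L$ is never correct, and invoke Lemma~\ref{small lemma} to get realizability and the contradiction, with the ``moreover'' clause handled by the same observation. Your explicit handling of the partiality convention (counting an undefined output as a mistake) makes fully precise a point the paper leaves as ``a simple inductive argument''.
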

Consequently, every $h \in \overline{\mathcal{H}}$ is computable for any computably universally online learnable class $\mathcal{H}$.
\begin{lemma}\label{fullrealizablecomp}
    Let $\mc H $ be computably universally online learnable. Then each member of $\overline{\mc H}$ is computable. 
\end{lemma}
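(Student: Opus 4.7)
The plan is to combine the Projection Lemma (Lemma~\ref{lem:projection lemma}) with Lemma~\ref{small lemma} and the computability of the learner. Fix $h \in \ol{\mc H}$. By assumption, there is a computable learner $L$ that makes only finitely many mistakes on every $\mc H$-realizable sequence. I want to produce a concrete finite sample $S'$ that is $\mc H$-realizable and such that $L(S',\cdot) = h$ as a function of the second argument; from this $h$ is computable, because $L$ is computable as a two-argument function and $S'$ is a fixed finite object that can be hard-coded.

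First, I would verify the hypothesis of the Projection Lemma. Take any enumeration $(x_t)_{t=1}^\infty$ of $\N$ and form the sequence $((x_t, h(x_t)))_{t=1}^\infty$. By Lemma~\ref{small lemma}, since $h \in \ol{\mc H}$, this sequence is $\mc H$-realizable. Hence $L$ makes only finitely many mistakes along it, so in particular $L(S, x_{k+1}) = h(x_{k+1})$ for all but finitely many $k$ (where $S$ is the length-$k$ prefix of the sample). Since this holds for every choice of $h \in \ol{\mc H}$ and every enumeration of $\N$, the hypothesis of Lemma~\ref{lem:projection lemma} is satisfied.

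Applying the Projection Lemma then yields a sample $S'$ that is $\mc H$-realizable, consistent with $h$, and satisfies $L(S', \cdot) = h$ as functions on $\N$. Because $L$ is computable as a function $(S, x) \mapsto L(S, x)$ and $S'$ is a fixed finite sequence, the one-argument function $x \mapsto L(S', x)$ is computable (obtained by hard-coding $S'$ into the program for $L$). Therefore $h$ is computable, completing the proof.

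There is no significant obstacle here: the argument is essentially an immediate consequence of chaining the two helpful lemmas together with the computability hypothesis. The only subtlety worth double-checking is that the Projection Lemma's hypothesis, phrased in terms of \emph{infinitely many} correct predictions, is trivially met since a universal online learner makes only \emph{finitely many} mistakes in total along any realizable play.
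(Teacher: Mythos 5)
Your proof is correct and follows essentially the same route as the paper: invoke the Projection Lemma to obtain a finite sample $S'$ with $L(S',\cdot)=h$, then observe that hard-coding $S'$ into the computable learner $L$ yields a program computing $h$. The paper's version is more terse (it leaves the verification of the Projection Lemma's hypothesis implicit, since that check is essentially contained in the Projection Lemma's own proof via Lemma~\ref{small lemma}), but the substance is identical.
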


\subsection{Proof of Proposition \ref{prop:RERuncomputable}
}\label{proof.RERuncomputable}

\RERuncomputable*
\begin{proof} A tree is a set of strings $T$ such that, if $\sigma\in T$, then each prefix of $\sigma$ is also a member of $T$. A tree is computable if there is a program that, for a given $\sigma$, decides whether $\sigma$ belongs to the tree. An infinite path in $T$ is an infinite string such that each its prefixes is a member of $T$. We denote the set of infinite paths in $T$ by $[T]$. It is a folklore result that there exist computable binary trees with at most countably many infinite paths, at least one of which is uncomputable \citep[see, e.g., ][]{cenzer_members_1986}. Fix such a tree $T$. Let $\mc H = \{\sigma 0^\infty: \sigma \in T\}$. It is easy to see that $\mc H$ is a $RER$ class, $[T] \subseteq \ol{\mc H}$ and that $\ol{\mc H}$ is countable. Given an enumeration $h_1,h_2,\dots$ of $\ol{\mc H}$, an uncomputable learner for $\mc H$ can be defined as follows: given a sample $S$ and $x$, we choose the least $i$ such that $h_i$ is consistent with $S$ and we output $h_i(x)$. However, there is no computable learner for $\mc H$. Otherwise, by Lemma~\ref{fullrealizablecomp}, every member of $\ol{\mc H}$ would be computable, and thus all paths of $T$ would be computable.
\end{proof}

\subsection{Proof of Theorem \ref{thm:total equals agnostic}} \label{proof.total equals agnostic}
\TotalequalsAgnostic*
\begin{proof}
The implication $\eqref{item:agno-total}\Ra \eqref{item:agno-enum}$ follows from Lemma~\ref{lem:projection lemma}. Assume that $\mc H$ is computably universally online learnable via a total learner $L$. Let $S_0, S_1, \dots$ be a computable enumeration of all possible samples (not necessarily realizable ones). For each $S_n$, the function $L(S_n, \cdot)$ is total computable with range $\{0,1\}$. This gives us a computable enumeration of the class $\mc Z = \{L(S_n, \cdot): n \in \N\}$, which shows that $\mc Z$ is $RER$. Lemma~\ref{lem:projection lemma} then allows us to conclude that $\mc Z$ is a superset of $\ol{\mc H}$.

The implication $\eqref{item:agno-enum}\Ra \eqref{item:agno-agno}$ follows from the standard results about prediction with expert advice using countable experts. To be precise, we use the following lemma:
\begin{lemma}\label{lemma experts to agnostic}
Given a computably enumerable set of total computable functions $A$, there exists a computable randomized algorithm with sublinear regret relative to the best function in~$A$.
\end{lemma}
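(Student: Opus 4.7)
The plan is to adapt the standard exponentially weighted forecaster (Hedge) to the countable-experts setting, using a non-uniform prior over $A$ together with a gradual activation schedule so that each round remains finitely computable.

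First, fix a computable enumeration $f_1, f_2, \ldots$ of $A$, which exists because $A$ is c.e.\ and each function in $A$ is total computable, so $f_i(x)$ halts on every input. Choose a computable prior $\pi$ on $\mathbb{N}$ with $\pi_i > 0$ for all $i$, for concreteness $\pi_i = 1/(i(i+1))$. The algorithm activates expert $i$ at round $i$ with initial weight $\pi_i$ and maintains weights $w_i^{(t)}$ over the currently active (finite) pool. At round $t$ it reads $x_t$, computes $f_i(x_t)$ for each active $i$, samples a binary prediction from the weighted ensemble, receives $y_t$, and multiplies the weight of every mispredicting active expert by $e^{-\eta_t}$ for a standard decreasing learning-rate schedule such as $\eta_t = \Theta\bigl(\sqrt{(\log t)/t}\bigr)$. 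Each round consists of finitely many evaluations of total computable functions plus finitely many arithmetic operations, so the whole procedure is realized as an almost total randomized computable function.

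For the regret analysis I would invoke the classical Hedge bound for countably many experts equipped with a prior (as in Cesa-Bianchi and Lugosi's \emph{Prediction, Learning, and Games}). For any fixed expert $f_k$, the rounds $t < k$ in which $f_k$ is not yet active contribute at most $k$ to the regret against $f_k$, and the standard potential argument bounds the regret in the remaining rounds by $O\bigl(\sqrt{n\log(1/\pi_k)}\bigr)$. Hence the expected regret against $f_k$ is $O\bigl(k + \sqrt{n\log(1/\pi_k)}\bigr) = o(n)$ for every fixed $k$; taking the minimum over $k$ yields expected sublinear regret against the best function in $A$, as demanded by Definition~\ref{def:agnostic-learnability}.

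The only real obstacle is preserving computability in the presence of an infinite expert pool: the algorithm must decide at each round which experts to simulate and must be able to evaluate them in bounded time per round. Both issues are resolved by the gradual activation schedule together with the hypothesis that every $f_i \in A$ is total. No new idea beyond the textbook countable-experts analysis is required; the content of the lemma is essentially that the classical construction remains effective whenever the experts are supplied as a c.e.\ list of total computable programs.
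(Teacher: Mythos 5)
Your proposal is correct and uses essentially the same two-ingredient recipe the paper uses: exponential weighting over a finite-but-growing pool of experts drawn from a computable enumeration of $A$. Where you differ from the paper is in how you tame the infinite horizon and infinite expert pool. The paper restarts the algorithm via the \emph{doubling trick}: rounds are partitioned into blocks of length $2^k$, in block $k$ only the first $k$ experts compete with a uniform prior and a learning rate tuned to $T = 2^k$, and the per-block bound $\sqrt{(k/2)\ln 2^k}$ is summed over blocks. You instead keep a single run with a non-uniform prior $\pi_i$ and a decreasing learning rate, activating expert $i$ at round $i$, and quote the countable-experts Hedge bound $O(k + \sqrt{n\log(1/\pi_k)})$. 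Both routes work, but the paper's block restart cleanly avoids a subtlety that your version leaves implicit: if you simply inject expert $i$ at time $i$ with absolute weight $\pi_i$, the standard potential argument does not telescope directly (the normalizer jumps each time a new expert enters), and one must either appeal to the sleeping-experts analysis — charging dormant experts the algorithm's own loss — or normalize the injected weight against the current total. Your bound is achievable, but you should say which of these repairs you are invoking rather than citing the textbook bound as if the pool were static.

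The one genuine gap in your write-up is the computability of the sampling step. You assert that each round is "finitely many arithmetic operations," but the weights involve $e^{-\eta_t L_i(t)}$, so the sampling probabilities are computable \emph{reals}, not rationals; drawing from such a distribution with a stream of fair coin flips only halts almost surely, not surely. This is exactly why the lemma is stated in terms of an \emph{almost total} randomized computable function, and the paper spends a full paragraph constructing the almost-total measure-preserving map from the uniform measure on $2^\omega$ to the expert distribution (splitting the random tape into columns, mapping each column to a subinterval of $[0,1]$, and noting the boundary set has measure zero). Your proof needs that paragraph, or at least a pointer to it; without it, "computable randomized algorithm" is not justified.
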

Several techniques are known that can be used to obtain the above lemma \citep[see, e.g., Theorem 5 in ][]{hutter2005adaptive}. Computability of these techniques is usually only briefly discussed or even not mentioned at all. We give a detailed proof of computability for one such algorithm in the Appendix~\ref{app:experts}.

Finally, we argue how to transform a computable agnostic learner into a total computable universal for the realizable case. The argument proceeds in two steps. First, we show that $\eqref{item:agno-agno}\Ra \eqref{item:agno-enum}$. Then, the implication $\eqref{item:agno-enum}\Ra \eqref{item:agno-total}$ is straightforward: the strategy of the learner is to enumerate the elements of $Z$ and try each hypothesis until we stop making mistakes. \\
We now argue that $\eqref{item:agno-agno}\Ra \eqref{item:agno-enum}$. Again, we want to apply Lemma~\ref{lem:projection lemma} to show that we can obtain an enumeration of a superset of $\ol{\mc H}$ by enumerating all functions of the form $L'(S,\cdot)$, where $L'$ satisfies the condition from Lemma~\ref{lem:projection lemma} (i.e., making infinitely many correct predictions against any Adversary). Therefore, we need to show how to transform the randomized agnostic learner $L$ into a deterministic $L'$ that makes infinitely many correct predictions on any $\mc H$-realizable sequence. 

In principle, it suffices to consider the majority vote taken over the internal randomness of $L$, that is, to always predict the more probable outcome (breaking ties arbitrarily). However, the majority vote for an almost total randomized function is not always computable. To guarantee computability, we need one additional simple trick. Imagine that we run an almost total randomized algorithm $f$ on some input $x$, and the computation stops after a finite number of steps. In particular, this means that $f$ has seen only a finite number of bits of the random advice---$f$ can read at most one bit at each computation step. Now, suppose that we simulate $f(x)$ for all possible random advices until the computations stop for at least $1-\epsilon$ probability mass of the random advices, for some small $\epsilon>0$. This happens after a finite number of steps since $f$ is defined on random advices of probability one.

With that in mind, we define the learner $L'$. Given the $k$-th number to label, $L'$ first simulates the behavior of $L$ on all random inputs until these computations stop for at least $1-(k+1)^{-2}$ probability measure of random advices. For each round, this also induces a new probability measure on random advices, where the whole probability mass is restricted to the set of the uniform measure $1-(k+1)^{-2}$. Let us denote the product of these probability measures by $P'$.

Now, suppose that we are playing against Adversary on some $\mc H$-realizable sequence $X$. Let us define a new random variable $\eta(S)$ as the number of mistakes made by $L$ on a sample $S$ divided by $|S|$. Also, let $X_n$ be the prefix of $X$ of length $n$. Since $L$ is an agnostic learner for $\mc H$, we have $\limsup_{n\to\infty}\mathbb{E}(\eta(X_n)=0).$ By the dominated convergence theorem, we obtain $\mathbb{E}(\limsup_{n\to\infty}\eta(X_n)=0).$
At the same time, we can observe that the same property holds if the expectation is taken with respect to the $P'$ measure. This is because, at each round, the probability of making an error cannot increase more than $(k+1)^{-2}$ (recall that for binary variables expectation is equal to the probability of sampling $1$). Since $\sum_{k>0}(k+1)^{-2}$ converges, this additional error can be neglected in the limit.

This allows us to apply Markov's inequality to conclude that $\limsup_{n\to\infty}\eta(X_n)>0$ occurs with $P'$-probability at least $1/2$. As a consequence, $L'$ makes infinitely many correct predictions on $X$. Since $X$ was chosen to be an arbitrary realizable sequence, we have shown that $L'$ satisfies the condition from Lemma~\ref{closure lemma}.
\end{proof} 

\subsection{Proof of Theorem \ref{thm:separation}}\label{prf:separation}\label{proof.sep1}
\Separation*
\begin{proof}We define a hypothesis class $\mc H$ that is computably universally online learnable yet has no total computable learner. Once this is established, the result follows by an application of Theorem~\ref{thm:total equals agnostic}.

Recall that $\varphi_n$ denotes the function computed by the $n$-th program. We say that an infinite binary string $h$ is an \textit{evil sequence} for a learner $\varphi_n$ if it satisfies the following conditions:
\begin{enumerate}
    \item The initial segment of $h$ of length $n + 1$ is $0^n 1$; i.e., $h \seg (n + 1) = 0^n 1$;
    \item For every $k > n$, when the learner $\varphi_n$ is given the first $k$ bits of $h$—that is, the sample $(i, h(i))_{i=0}^{k-1}$—and is asked to predict the $k$-th bit, it outputs $1 - h(k)$.
\end{enumerate}

If $\varphi_n$ is a total learner, then there is a unique evil sequence for $\varphi_n$. We define $\mc H$ as the least class containing the evil sequences of all total learners $\varphi_n$.

 Clearly, no total computable $\varphi_n$ learns $\mathcal H$, because it makes infinitely many mistakes on the realizable sequence $(i,h(i))_{i=0}^\infty$, where $h$ is the evil sequence for $\varphi_n$.

Before presenting a computable learner for $\mathcal{H}$, we need to establish certain properties of the class. Countability follows from the fact that there are countably many total learners $\varphi_n$, and each of them uniquely determines an evil sequence. Below, we show that $\overline{\mathcal{H}} = \mathcal{H} \cup \{0^\infty\}$. 

It is clear that $0^\infty \in \ol{\mc H}$ because there are hypotheses in $\mc H$ that start with $0^n$, for infinitely many $n$. Towards a contradiction, suppose that $f \in \ol{\mc H} \setminus \mc H$ and $f \neq 0^\infty$. Then there exists a least $n_0$ with $f(n_0) = 1$. Hence, some hypothesis $h \in \mc H$ starts with $0^{n_0} 1$. By the uniqueness of evil sequences, it follows that $\mc H$ contains only one hypothesis extending $0^{n_0} 1$. However, the assumption $f \in \ol{\mc H} \setminus \mc H$ implies that there are natural numbers $n_1 < n_2 < \dots$ and hypotheses $h_1, h_2, \dots $ from $\mc H$ such that each $h_i$ is distinct from $f$ but $h_i$ and $f$ agree on the first $n_i$ bits. It follows that $\mc H$ contains infinitely many hypotheses extending $0^{n_0} 1$, a contradiction.

A partial function $e(n,x)$ such that, for every $n$, if $\varphi_n$ is a total learner, then $e(n,\cdot)$ is the evil sequence for $\varphi_n$, can be computed by the following well-defined recursive procedure: on input $x \leq n$, output the $x$-th bit of $0^n1$; on input $x > n$, output $ \hat y \in \{0,1\}$ such that  $\hat y \neq \varphi_n ((i, e(n,i))_{i=0}^{x-1}, x)$.
\paragraph{Computable learner for $\mc H$.}
     We start by guessing according to the hypothesis $0^\infty$ until Adversary reveals a one. Suppose the latter happens for the first time at position $m$, i.e., $y(m) = 1$. At this point we know that Adversary is restricted to finitely many hypotheses from $\mc H_m := \{ h \in \mc H: \text{$h$ is the evil sequence for a total learner $\varphi_n$, $n \leq m$} \}$. At subsequent stages, given a sample $S$ and an element $x$, we search for $n \leq m$ such that $e(n,0),\dots, e(n,M)$ are defined and consistent with $S$, where $M$ is the maximum of $x$ and all numbers appearing in $S$. Such $n$ exists, because Adversary is consistent with an element of $\mc H_m$ and $e$ is able to compute it, as we observed above. Once such $n$ is found, we predict $e(n,x)$. Notice that if we make a mistake by predicting $e(n,x)$, we will not predict according to $e(n,\cdot)$ again. If we were to make infinitely many mistakes, it would mean that $\mc H_m$ is infinite, which is not the case.\end{proof}

\subsection{Proof of Theorem \ref{thm:total equals partial for RER}} \label{proof.total equals partial for RER}
\TotalequalsPartialRER*
\begin{proof} Let $\mc H$ be a $RER$ class. The implication $\eqref{item:total}$ $\Ra$ $ \eqref{item:partial}$ is obvious. We first prove $\eqref{item:partial}$~$\Ra$~$\eqref{item:Z}$. Assume that $\mc H$ is computably universally online learnable via $L$. Let $S_0, S_1, \dots$ be a computable enumeration of all finite $\mc H$-realizable samples. Such an enumeration exists because $\mc H$ is a $RER$ class. For each $S_n$, the function $L(S_n, \cdot)$ is total computable with range $\{0,1\}$. This follows from the fact that $S_n$ is $\mc H$-realizable and that $L$ is a learner for $\mc H$. We observe that the class $\mc Z := \{L(S_n,\cdot): n \in \N\}$ is $RER$. The reason for this is as follows: each $L(S_n,\cdot)$ is a function computable by a program that can be effectively obtained from the program computing $L$ and $n$, uniformly in $n$. (We use this kind of reasoning frequently, but state it explicitly only here.)
 
It remains to prove that $\ol{\mc H} \subseteq Z$. 
 Let $h \in \ol{\mc H}$. By Lemma~\ref{lem:projection lemma}, we can choose an $\mc H$-realizable sample $S_e$ such that $S_e$ is consistent with $h$ and $L(S_e,\cdot) = h$. Therefore, $h \in \mc Z$.

Now, we prove $\eqref{item:Z}\Ra \eqref{item:total}$. Suppose $\ol{\mc H} \subseteq \mc Z$, for some $RER$ class $\mc Z$. Let $Z$ be a c.e. set of programs such that $\mc Z = \{\varphi_e: e \in Z\}$, and let $z_0,z_1,\dots$ be a computable enumeration of $Z$. We define a learner $L$ for $\mc H$ as follows. On input $(S,x)$, where $S$ is any (finite) sample and $x \in \N$, we compute $i_0 :=$ the least $i  < |S|$ such that $S \text{ is consistent with }\varphi_{z_i} $. If no such $i < |S|$ exists, we set $i_0 = |S|$. The output of the learner on $(S,x)$ is $\varphi_{z_{i_0}}(x)$. Since each $\varphi_{z_i}$ is total, $i_0$ and $\varphi_{z_{i_0}}(x)$ are always defined. Therefore, $L$ is total computable.

To show that $L$ learns $\mc H$, let $R = (x_t,y_t)_{t=0}^\infty$ be $\mc H$-realizable. By Lemma~\ref{closure lemma}, and the fact that $\ol{ \mc H} \subseteq \mc Z$, there exists $j_0 :=$ the least $j$ such that for all $n$, $R_{<n}$ is consistent with $\varphi_{z_j}$. By the definition of $L$, for almost all $n$, $L(R_{<n},x_n) = \varphi_{z_{j_0}}(x_n)$, and thus $L$ makes only finitely many mistakes on $R$, hence $\mc H$ is computably universally online learnable by a total learner. \end{proof}

\subsection{Proof of Theorem \ref{thm:proper}}\label{proof.proper}
\ProperLearner*
\begin{proof}Let $\mc H$ be $RER$. To prove ($\Rightarrow$), assume that $\ol{\mc H}$ is $RER$, take $\mc Z := \ol{\mc H}$, and proceed as in the proof of $\eqref{item:Z}\Ra \eqref{item:total}$ in Theorem \ref{thm:total equals partial for RER}. This way, we obtain a computable proper learner for $\mc H$.

    To prove ($\Leftarrow$), assume that $\mc H$ is computably universally online learnable by a proper learner~$L$. Let $S_0,S_1,\dots$ be a computable enumeration of all $\mc H$-realizable samples. Let $\mc F~=~\{L(S_n,\cdot)~:~n \in~\N\}$. Clearly, $\mc F$ is $RER$. Below, we show that $\mc F = \ol {\mc H}$. 

    To show $\mc F \subseteq \ol {\mc H}$, let $g \in \mc F$. We have $g = L(S_n,\cdot)$, for some $n$. By the definition of proper learner (Definition \ref{def:proper learner}), $g$ is $\mc H$-realizable. By Lemma \ref{closure lemma}, it follows that $g \in \ol{\mc H}$. 

    To prove that $ \ol {\mc H} \subseteq \mc F$, let $g \in \ol{\mc H}$. By Lemma \ref{lem:projection lemma}, there exists $n$ such that $S_n$ is consistent with $g$ and $L(S_n,\cdot) = g$. Hence, by the definition of $\mc F$, we have $g \in \mc F$.
\end{proof}

\subsection{Proof sketch of Theorem \ref{separation theorem proper}}\label{proof.sep2}
\SeparationProper*
The full proof can be found in Appendix \ref{fullproof.sep2}. We construct a class $\mc A$ as stated in the theorem. The construction of $\mc A$ is a program that enumerates its elements. To ensure that no computable learner for $\mc A$ is proper, we satisfy the following requirement $R_e$ for each program $e$:
    \begin{equation}\label{requirement}
        \text{If a learner $\varphi_e$ is proper with respect to $\mc A$, then }  \text{$\varphi_e$ does not learn $\mc A$.}\tag{$R_e$}
    \end{equation} 
    We briefly explain the approach to satisfying the requirements.
    Each $R_e$ is assigned its unique starting string $ 0^e 1$. The strategy for $R_e$ operates at timesteps $s \geq e$ and is allowed to put into $\mc A$ only certain elements of $[ 0^e 1]$. By definition, $[ 0^e 1]$ consists of all infinite binary sequences that extend $ 0^e 1$.
    
    \paragraph{Strategy for $R_e$.} At timestep $s = e$, there is only one point in $[ 0^e 1] \cap \mc A$, namely $0^e 1 0^\infty$. We let $\gamma_e :=  0^e 1$. At subsequent timesteps, we monitor which bit $\varphi_e$ predicts to come right after $\gamma_e$, knowing that currently in $\mc A$ there is only one element extending $\gamma_e$, namely $\gamma_e 0^\infty$. More precisely, we think of $\gamma_e$ as the sample $((0,\gamma_e(0)), (1,\gamma_e(1), \dots, (l - 1, \gamma_e(l-1))$, where $l$ is the length of $\gamma_e$, and we monitor the value of $\varphi_e(\gamma_e,l)$. Obviously, we cannot know this in advance---we can only check, at subsequent timesteps of the construction, whether the program $e$ outputs the prediction before executing at most $s$ instructions. 
    
If $e$ never (that is, at no timestep $s$) outputs a prediction, then $e$ is not a learner for $\mathcal{A}$, because it is undefined on the $\mathcal{A}$-realizable sample $\gamma_e$. In this case, $R_e$ is vacuously satisfied, since the antecedent of $R_e$ is false. Now suppose that $e$ eventually outputs a prediction. Then we consider two cases.
    
    In the first case, we see that $\varphi_{e}$ is not proper for the current $\mc A$---that is, $\varphi_{e}(\gamma_e, l) = 1$ (recall that, currently, the only element of $\mc A$ extending $\gamma_e$ is $\gamma_e 0^\infty $). In that case, we stop putting any new points from $[0^e 1]$ into $\mc A$. Consequently, $\varphi_e$ will remain non-proper for $\mc A$ and $R_e$ will be satisfied forever. 
    
    In the second case, we see that $\varphi_{e}$ may be proper for the current $\mc A$---that is, $\varphi_{e}(\gamma_e, l) = 0$. In that case, we enumerate into $\mc A$ a new hypothesis $\gamma_e 1 0^\infty$. This way, we force $\varphi_e(\gamma_e, l)$ to make a mistake, because now Adversary could disagree with the prediction $\varphi_e(\gamma_e,l)=0$ and reply with~$1$. We set $\gamma_e := \gamma_e 1$ and $l := |\gamma_e|$, and the strategy continues monitoring the prediction of $\varphi_e(\gamma_e,l)$, with these new values of $\gamma_e,l$. This ends the description of the strategy for $R_e$.

    If, at subsequent stages, we keep getting the result that $\varphi_{e}(\gamma_e,l)$ is proper for the current $\mc A$ (the second case above), we will put into $[0^e 1] \cap \mc A$ infinitely many hypotheses (in the limit), namely $0^e 1^n 0^\infty$, for every $n > 0$. By the construction, $\varphi_e$ will make infinitely many mistakes on the $\mc A$-realizable sequence $0^e 1^\infty$. Consequently, $\varphi_e$ will not be a computable learner for $\mc A$ and $R_e$ will be satisfied.

\begin{ack}
Dariusz Kalociński acknowledges support of National Science Centre Poland grant under the agreement no. 2023/49/B/HS1/03930. Tomasz Steifer was supported by the Agencia Nacional de Investigaci\'on y
Desarrollo grant no. 3230203.
\end{ack}

\bibliography{ref}
\bibliographystyle{apalike}

\appendix

\section{Proofs of helpful lemmas}\label{appendix helpful lemmas}
To prove the required lemmas from Section \ref{sec:technical}, we invoke basic concepts and results from topology. 

The Cantor space is the topological space on $2^\omega$ (the set of all infinite binary sequences) with basic open sets of the form $[\sigma] := \{\sigma X: X \in 2^\omega \}$ for all $\sigma \in 2^{<\omega}$ (finite binary strings).  For brevity, we denote this space by $2^\omega$. It is clear that members of $2^\omega$ correspond to hypotheses, and that sets of points in the Cantor space correspond to hypothesis classes. The notion of closure introduced in the main text coincides with the standard topological closure: the closure of $\mc A $, denoted by $\ol{ \mc A}$, is defined as the least closed set containing $\mc A$.

We say that $p$ is an accumulation point of $\mc A$ if $p \in \ol{\mc A \setminus \{p\}}$. In other words, every neighborhood of $p$ contains points of $\mc A$ different from $p$. Intuitively, $p$ can be "approximated" by points in $\mc A$. The set of accumulation points of $\mc A$ is denoted by $D(\mc A)$ and is called the \emph{derived set} of $\mc A$. The closure of a class can then be expressed in terms of the derived set as $\ol{\mc A} = \mc A \cup D(\mc A)$.

\begin{proof}[Proof of Lemma \ref{small lemma}]
    The lemma is obvious if $h \in \mc H$. Now suppose $h \in \overline{\mc H} \setminus \mc H$. Since $\overline{\mathcal H} = \mathcal H \cup D(\mathcal H)$, it follows that $h \in D(\mc H)$. Therefore, every open set containing $h$ also contains points of $\mathcal H$ distinct from $h$ \citep[see, e.g.,][Theorem~1, p.~141]{kuratowski_introduction_1972}. In other words, for every $n \in \N$, there exists $h_n \in \mc H$ such that $h \neq h_n$ and $h_n \in [h \restriction n]$. It follows that for each $n$,  $(x_t,h(x_t))_{t=0}^{n-1}$ is consistent with $h_n \in \mc H$. Hence, $h$ is $\mathcal H$-realizable.    
\end{proof}

Lemma~\ref{closure lemma} follows from some basic properties the Cantor space. Recall that the Cantor space is induced by the following metric on $2^\omega$, known as the Baire metric:
\begin{equation}\label{baire's metric}
    \delta(f,g) = \begin{cases}
        0 & \text{if } f = g,\\
        \frac{1}{\mu x[f(x)\neq g(x)]} & \text{otherwise.}    
    \end{cases}
\end{equation} The Cantor space is complete with respect to $\delta$.



\begin{proof}[Proof of Lemma \ref{closure lemma}]
    Let $s = (x_i, y_i)_{i=0}^\infty$ be an $\mc H$-realizable sequence. Hence, there exists a sequence $(h_i)_{i=0}^\infty$ of elements of $\mc H$ such that for every $k \in \N$, the sample $(x_i, y_i)_{i=0}^k$ is consistent with $h_k$. We call such a sequence a witness.

    Suppose there exists a witness $(h_i)_{i=0}^\infty$ such that $\{h_0,h_1,\dots \}$ is finite. Then there is $\zeta \in \mc H$ that occurs in $(h_i)_{i=0}^\infty$ infinitely often, and therefore $\zeta$ is consistent with $(x_i, y_i)_{i=0}^\infty$. Moreover, $\zeta \in \mc H \subseteq \ol {\mc H}$, so $\zeta $ is as desired.

    Now suppose that for every witness $(h_i)_{i=0}^\infty$, the set $\{h_0,h_1,\dots\}$ is infinite. Fix a witness $(h_i)_{i=0}^\infty$ and let $W = \{h_0,h_1,\dots\}$.
    
    Let $D = \{ x_0,x_1,\dots\}$. We define a sequence of subsets of $2^\omega$. We set $C_{-1} = 2^\omega$. For $n \geq 0$, we define:
\begin{equation}\label{closueLemmaSeq}
    C_n = \begin{cases}
       C_{n-1} \cap [(n,s(n))] & \text{ if $n \in D$, and} \\
       C_{n-1}\cap [(n,b)] &\text{ if $n \notin D$, $b \in \{0,1\}$ and $C_{n-1}\cap [(n,b)] \cap W$ is infinite. }
    \end{cases}
\end{equation}
Above, $[(n,s(n))] = \{X \in 2^\omega: X(n)=s(n)\}$. 

It is clear that $C_{-1} \supseteq C_0 \supseteq C_1 \supseteq \dots$. We now show that $C_{n} \cap W$ is infinite, for $n=-1,0,1,2,\dots$. This is obvious for $n=-1$. Fix $n \geq 0$ and suppose that $C_{n-1} \cap W$ is infinite. 

First, assume $n \in D$. 
By the definition of a witness, we see that $[n,s(n)] \cap W \supseteq \{h_j \in \mathcal H: j \geq n\}$. Since $C_{n-1} \cap W$ is infinite, $C_n \cap W = C_{n-1} \cap [n,s(n)] \cap W$ is also infinite.

Second, assume that $n \notin D$. Now, since $C_{n-1} \cap W$ is infinite, it must be the case that either $C_{n-1} \cap W \cap [(n,0)]$ or $C_{n-1} \cap W \cap [(n,1)]$ is infinite. So we can choose $b$ that satisfies \eqref{closueLemmaSeq}.

Observe that $C_{-1}, C_0, C_1, \dots$ are closed. Obviously, $C_{-1}$ is closed, and if $C_{n-1}$ is closed then $C_n$, being the intersection of $C_{n-1}$ and a clopen set, is also closed.

Pick a sequence of points $(\zeta_n)_{n=-1}^\infty$ such that $\zeta_n \in C_{n}\cap W$. The sequence $(\zeta_n)_{n=-1}^\infty$ is Cauchy under the Baire metric. Since the Cantor space is complete with respect to the Baire metric, there exists $\lim_{n \to \infty} \zeta_n = \zeta \in 2^\omega$. All elements of $(\zeta_n)_{n=-1}^\infty$ belong to $\mathcal H$. Since $\ol {\mathcal H}$ is closed, $\zeta \in \ol {\mc H}$.

It remains to show that $\zeta$ is consistent with $s$. Suppose not. Hence, for some $x \in D$, $\zeta(x) \neq s(x)$. Hence $\zeta \in [(x,1-s(x))]$, so $\zeta \notin C_x$. On the other hand, for each $n=-1,0,1,\dots$, almost all elements of $(\zeta_n)_{n=-1}^\infty$ are in $C_n$, and since $C_n$ is closed, $\zeta \in C_n$. So $\zeta \in C_x$ and $\zeta \notin C_x$, a contradiction. So $\zeta$ is consistent with $s$.
\end{proof}

\begin{proof}[Proof of Lemma \ref{lem:projection lemma}]
    Let $\mc H$ be a class of hypotheses and $L$ a learner. Assume that for each $\mc H$-realizable play of the Adversary, $L$ predicts infinitely many labels correctly. Towards a contradiction, suppose that there exists $h \in \ol {\mc H}$ such that for every sample $S$, if $S$ is consistent with $h$ then $\ L(S,\cdot) \neq h$. Choose such an $h$. 
    
    We define a sequence $(U_n)_{n=0}^\infty$ by induction. Let $U_0 = \emptyset$. Given $U_n$, we define $U_{n+1}$ as $U_n$ prolonged by $(x_n, h(x_n))$, where $x_n$ is the least $x$ such that $L(U_n,x) \neq h(x)$. This definition is correct which can be shown by a simple inductive argument. 
    
    Each $U_n = (x_t,h(x_t))_{t=1}^n$ is an initial segment of the sequence $(x_t,h(x_t))_{t=1}^\infty$. By Lemma~\ref{small lemma},  $(x_t,h(x_t))_{t=1}^\infty$ is $\mc H$-realizable so by assumption, $L$ predicts infinitely many labels correctly when playing against $(x_t,h(x_t))_{t=1}^\infty$. However, $L(U_n, x_{n+1}) \neq h(x_{n+1})$, for all $n$. A contradiction. This proves the first part of the lemma.

    The second part follows from the simple observation that if $S$ is consistent with $h \in \ol{\mc H}$ then $ S$ is $\mc H$-realizable. 
\end{proof}

\begin{proof}[Proof of Lemma \ref{fullrealizablecomp}]
        Let $L$ be a computable learner for $\mc H$ and let $h \in \ol{\mc H}$. By Lemma~\ref{lem:projection lemma}, there exists a sample $S$ such that $L(S,\cdot) = h$. Hence, $h$ is computable.
\end{proof}

\section{Proof of Lemma \ref{lemma experts to agnostic}}\label{app:experts}
Suppose that we have a countable set of experts $N =\{ e_1,e_2,\ldots\}$. Our goal is to prove that there exists an almost total randomized computable algorithm that achieves sublinear expected regret with respect to $N$ (where the expectation is taken with respect to the random advice sampled from the uniform measure). We will employ a variant of Weighted Majority Algorithm $(WMA)$, also known as Exponentially Weighted Average \citep{littlestone1994weighted,cesa1997use}, combined with the so-called \emph{doubling trick} and increasing the number of experts over time. 

First, we briefly recall the randomized version of the algorithm for a fixed time horizon $T$ and a finite number of experts $n$. At each round $t\leq T$, the algorithm computes the weight $w_i(t)$ for each expert $e_i$. The prediction is obtained by sampling from a probability distribution supported on the set of experts, with the probability of each expert $e_i$ being $w_i(t)/\sum_{j\leq n}w_j(t)$, and returning the prediction of the chosen expert. The weights are initialized as uniform ($w_i(1)=1$) and then, at each round, we multiply the weight of the expert by $e^\eta$ if and only if the expert made an incorrect prediction in the last round (otherwise, the weight remains the same). More precisely, $w_i(t)=e^{\eta L_i(t)}$, where $L_i(t)$ is the number of incorrect predictions of expert $e_i$ before round $t$. It is known that for the appropriate value of $\eta$, depending in a computable way on $T$ and $n$, the expected regret of this algorithm with respect to the finite set of experts, at each round $t\leq T$, is bounded by $\sqrt{\frac{n}{2}\ln T}$ (see Theorem~2.2. in \citet{cesa2006prediction} for a modern exposition of this fact).

To extend this algorithm to an infinite pool of experts and an unbounded time horizon, we employ a version of the \emph{doubling trick}. We divide all rounds into disjoint blocks of increasing length, where $k$-th block consists of $2^k$ rounds. In each block, we run the initial algorithm from scratch (i.e., initializing the weights uniformly), with $T=2^k$ and the first $k$ experts, adjusting the parameter $\eta$ as required for the bound. 

Now, consider an expert $e_m$. There are only finitely many rounds in which this expert is not taken into account. In these rounds, in principle, it can happen that the expected regret wrt $e_m$ grows linearly. However, starting from the round $l>\sum_{0<i<m}2^k$, $e_m$ is added to the active pool of experts, and the expected regret bound with respect to this expert applies. For $k>m$, the expected regret with respect to $e_m$, in the $k$-th block of rounds, increases by at most $\sqrt{\frac{k}{2}\ln 2^k}\leq\frac{k}{\sqrt{2}}$. In particular, the total expected regret for $\sum^{k-1}_{i=1}2^i<t\leq\sum^{k}_{i=1}2^i$ is bounded by $O(1)+\sum^{k}_{i=1}\frac{i}{\sqrt{2}}$, which suffices for sublinearity.  

The above expectation is taken with respect to the product measure defined by the probability distributions over the experts considered in each round. We briefly argue how to implement this procedure as an almost total randomized algorithm, with the bound on expected regret holding when expectation is taken with respect to the uniform measure (from which the random advice is sampled).

Given an infinite random advice $x$, we split it into an infinite number of columns $x_1,x_2,\ldots$. Effectively, these columns form an infinite sequence of mutually independent random advices, each sampled from the uniform measure. We will use $x_k$, the $k$-th column of $x$, to decide what to do in the $k$-th round of prediction.

In each round, our prediction procedure needs to sample one expert from a finitely supported probability distribution. The probability of sampling a given expert is a computable real number $r$, meaning that there is a computable procedure, which given $n$, outputs an approximation $q$ such that $|q-r|<2^{-n}$. In particular, this implies that there exists an almost total measure-preserving mapping from the uniform measure on infinite binary sequences into the set of experts \citep[see, e.g., Theorem~3.1 in][]{zvonkin1970complexity}. We apply this mapping to the $x_k$ to obtain the chosen expert. Since this mapping is finitely-valued, it is enough to read a finite prefix of $x$ to know which expert is chosen. 

More precisely, we interpret $x_k$ as a real number in the interval $[0,1]$ in the usual way and use the correspondence between the uniform measure and the Lebesque measure on that interval. The mapping between infinite binary sequences and real numbers is not unique, but it becomes unique if we omit the rational numbers (which have more than one representation as binary sequences) which form a set of measure zero. Finally, we partition $[0,1]$ into disjoint subintervals, where each subinterval corresponds to one expert and its length is equal to the probability of sampling that expert. It suffices to read a finite prefix of $x_k$, to know to which of these subintervals $x_k$ is mapped to, except in the case when the real number corresponding to $x_k$ lies on the boundary of the subinterval. There are only finitely many such boundaries, and they form a set of measure zero.

\section{Full proof of Theorem \ref{separation theorem proper}}\label{fullproof.sep2}

    We construct $\mc A$ as stated in the theorem. The construction proceeds computably in the number of timesteps $s = 0,1,\dots$. At each timestep, we have a finite set $A_s$ of programs, each computing an infinite binary sequence. We guarantee that $A_s \subseteq A_{s+1}$ for all $s \in \N$, and define $A = \bigcup_{s=0}^\infty A_s$. The class $\mc A$ is then defined as $\mc A = \{\varphi_e: e \in A\}$.
    
The requirements $R_e$ are formulated in Section~\ref{proof.sep2}. Below, we describe the strategy for $R_e$ more formally.
During the construction, the strategy maintains a variable $\gamma_e$ that stores a binary word. At timestep $0$, we set $ \gamma_e = 0^e 1$ and $\mc A_0 = \emptyset$. The strategy may be in one of two complementary stages: \emph{active} or \emph{inactive}. Initially, all strategies are \emph{active}. 

We use the notation $\varphi_{e,s}(\dots)$ to denote the execution of at most $s$ instructions of program $e$ on a given input. The program may or may not halt (and return a value) within the time bound $s$. A halting computation (within the time bound) is denoted by $\varphi_{e,s}(\dots)\downarrow$, and a non-halting one by $\varphi_{e,s}(\dots)\uparrow$.

\paragraph{$R_e$-strategy at timestep $s>0$.} 
    \begin{enumerate}
        \item[(1)] If $s = e$, set $\mc A_s := \mc A_s \cup \{\gamma_e 0^\infty\}$).\label{s=e}
        \item[(2)] If $s > e$ \& (the strategy is \emph{inactive} or $\varphi_{e,s}(\gamma_e,|\gamma_e|)\uparrow$), we do nothing.\label{inactive or waiting}
\item[(3)] If $s > e$ and the strategy is \emph{active}, and $\varphi_{e,s}(\gamma_e,|\gamma_e|)\downarrow$, then let $b := \varphi_{e,s}(\gamma_e,|\gamma_e|) $.\label{active and convergent}
\begin{enumerate}
    \item If $b = 1$, the $R_e$-strategy changes its state to \emph{invactive}.\label{convergent 1}
    \item       If $b=0$, set $\mc A_s := \mc A_s \cup \{\gamma_e 1 0^\infty\}$ and $\gamma_e := \gamma_e  1$.\label{convergent 0}
\end{enumerate}
    \end{enumerate}
 This ends the description of the strategy.
\paragraph{Construction.} At each timestep $s=0,1,\dots$, we first set $\mc A_s := \mc A_{s-1}$ and then we run the $R_e$-strategies for all $e \leq s$.

The verification is split into lemmas.
\begin{lemma}\label{verifRER}
    $\mc A$ is a $RER$ class.
\end{lemma}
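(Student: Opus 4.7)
The plan is to verify that the construction defining $\mc A$ is uniformly effective in the timestep $s$, so that the set $A = \bigcup_s A_s$ of program indices is computably enumerable; together with $\mc A = \{\varphi_e : e \in A\}$, this immediately gives that $\mc A$ is $RER$ by Definition~\ref{def RER}.

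First I would observe that every object placed into some $A_s$ during the construction has the form $\sigma 0^\infty$, where $\sigma$ is a finite binary string that is available to the construction at the moment of insertion (namely $\sigma = \gamma_e$ or $\sigma = \gamma_e 1$ for the current value of $\gamma_e$ in the $R_e$-strategy). Since the function $\sigma 0^\infty$ is computable and an index for it can be obtained effectively (and uniformly) from $\sigma$ by the s-m-n theorem, each insertion produces an explicit program index.

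Next I would check that the construction itself is computable as a function of $s$. At each timestep we first copy $A_{s-1}$ into $A_s$ and then run the $R_e$-strategies for all $e \leq s$; each such strategy only needs to (i) consult the finite current value of $\gamma_e$ and the state flag (active/inactive), and (ii) simulate at most $s$ steps of the program $\varphi_e$ on input $(\gamma_e, |\gamma_e|)$, which is a computable operation. The outcome of these operations determines deterministically whether to update $A_s$ and $\gamma_e$, so the whole stage is computed by a total algorithm.

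Combining these two observations, a single program can run the construction step by step, outputting each new index whenever it is added to some $A_s$. This gives a computable enumeration of $A$, so $A$ is c.e. Since by definition $\mc A = \{\varphi_e : e \in A\}$, this exhibits the c.e.\ witness required by Definition~\ref{def RER}, proving that $\mc A$ is $RER$. The statement involves no real obstacle: the only point to be careful about is the uniformity in obtaining program indices for the eventually-zero sequences $\sigma 0^\infty$, which is a direct application of the s-m-n theorem.
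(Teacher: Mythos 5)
Your proposal is correct and takes essentially the same approach as the paper: both apply the s-m-n theorem to effectively obtain an index for $\sigma 0^\infty$ from the string $\sigma$, observe that each stage of the construction is computable, and conclude that $A$ is c.e.\ and hence $\mc A$ is $RER$. The only cosmetic difference is that the paper phrases the c.e.-ness of $A$ via the projection characterization ($n \in A \Leftrightarrow \exists s\, (n \in A_s)$) whereas you describe the enumeration directly; these are equivalent.
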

\begin{proof}
 Given a binary string $\alpha $, $x \in \N$, and $i \in \{0,1\}$, let $g_i(\alpha,x)$ output $\alpha(x)$ for $x <|\alpha|$ and $i$ otherwise. Let $f_i$, $i \in \{0,1\}$, be a total computable function such that $\varphi_{f_i(\alpha)}(\cdot) = g_i(\alpha,\cdot)$.\footnote{It is computer science folklore that, given a program and an input to it, one can effectively transform it into another program that behaves exactly like the original on that input \citep[see, e.g.][Theorem~1-V]{rogers_theory_1967}.} In the construction, the operations $\mc A_s := \mc A_s \cup \{\gamma_e 0^\infty\}$ and $\mc A_s := \mc A_s \cup \{\gamma_e 1 0^\infty\}$ really mean $A_s := A_s \cup \{f_0(\gamma_e)\}$ and $A_s := A_s \cup \{f_0(\gamma_e 1)\}$, respectively. The set $A = \bigcup_{s=0}^\infty A_s$ is computably enumerable because it can be defined as follows: $n \in A$ if and only if there exists a timestep $s$ at which $n \in A_s$. Clearly, the predicate of two number variables $n,s$ defined by $n \in A_s$ is computable. It follows that $A$ is computably enumerable.\footnote{According to the standard characterization of computably enumerable sets, $B$ is computably enumerable if and only if there exists a computable relation $R(x,y)$ such that $ x \in B \Leftrightarrow \exists y R(x,y)$. Formal details can be found in \citet[][Corollary~5-XI]{rogers_theory_1967}.} Since we define $\mc A = \{\varphi_e: e \in A\}$, $\mc A$ is $RER$. 
\end{proof}
\begin{lemma}
    $\mc A $ is computably univerally online learnable.
\end{lemma}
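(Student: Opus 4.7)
The plan is to apply Theorem~\ref{thm:total equals partial for RER}: since $\mc A$ is already known to be $RER$ by Lemma~\ref{verifRER}, it suffices to exhibit a $RER$ class $\mc Z$ containing $\ol{\mc A}$.

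The main step is a structural description of $\ol{\mc A}$. Every hypothesis enumerated into $\mc A$ has the form $0^e 1^k 0^\infty$ for some $e \geq 0$ and $k \geq 1$; in particular, the set of positions labeled $1$ is always a nonempty finite interval. I then examine which $f \in 2^\omega \setminus \mc A$ can be accumulation points of $\mc A$ by analyzing the positions of their $1$s. If $f$ has no $1$s, then $f = 0^\infty$, which is indeed a limit of $\mc A$ since $0^e 1 0^\infty \in \mc A$ for every $e$. If $f$ has only finitely many $1$s at positions $p_1 < \dots < p_m$, then for any prefix length $N > p_m + 1$ an $h \in \mc A$ agreeing with $f$ on $[0,N-1]$ is forced to be $0^{p_1} 1^m 0^\infty$, which both requires the $p_i$ to be consecutive and equals $f$ itself; hence such an $f$ cannot lie in $\ol{\mc A} \setminus \mc A$. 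Finally, if $f$ has infinitely many $1$s, then since the $1$-set of each $h \in \mc A$ is a finite interval, the $1$-set of $f$ must be an infinite interval, so $f = 0^e 1^\infty$ for some $e$. This yields
\[
\ol{\mc A} \subseteq \mc Z := \mc A \cup \{0^\infty\} \cup \{0^e 1^\infty : e \in \N\}.
\]

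The remaining steps are routine. The class $\mc Z$ is $RER$: its three constituents are each $RER$ (for the last, a program for $0^e 1^\infty$ can be produced uniformly from $e$, and $\{0^\infty\}$ is trivially $RER$), and one obtains an enumeration of $\mc Z$ by interleaving. Applying Theorem~\ref{thm:total equals partial for RER} to the $RER$ class $\mc A$ together with the $RER$ superclass $\mc Z \supseteq \ol{\mc A}$ yields a total computable learner for $\mc A$, which is in particular a computable universal online learner. I expect no serious obstacle; the only delicate point is the case analysis of accumulation points, which is handled by the observation above.
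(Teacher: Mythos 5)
Your proposal is correct and follows essentially the same route as the paper: classify the accumulation points of $\mc A$ (exactly $0^\infty$ and the sequences $0^e 1^\infty$), exhibit a $RER$ class $\mc Z$ containing $\ol{\mc A}$, and invoke Theorem~\ref{thm:total equals partial for RER}. The only superficial difference is your choice $\mc Z = \mc A \cup \{0^\infty\} \cup \{0^e 1^\infty : e \in \N\}$ (which re-uses Lemma~\ref{verifRER} for $RER$-ness) versus the paper's slightly larger $\mc Z = \{0^e 1^j 0^\infty : e,j \in \N\} \cup \{0^e 1^j 1^\infty : e,j \in \N\}$, whose enumeration is immediate from the explicit program constructors $f_0,f_1$.
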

\begin{proof}
We define a $RER$ class $\mc Z$ such that $\ol{\mc A} \subseteq \mc Z$, and apply Theorem~\ref{thm:total equals partial for RER}.

Let $ Z = \{f_0(0^e 1^j): e,j \in \N\} \cup \{f_1(0^e 1^j): e,j \in \N\}$, where $f_0, f_1$ are as in the proof of Lemma~\ref{verifRER}, and define $\mc Z = \{\varphi_e: e \in Z\}$. By the projection theorem (see the proof of previous lemma), $Z$ is computably enumerable because it can be defined as follows: $n \in Z$ iff there exist $e,j$ such that $n =f_0(0^e 1^j)$ or $n = f_1(0^e 1^j)$. It follows that $\mc Z$ is $RER$. 

It remains to show that $\ol{\mc A} \subseteq \mc Z$. In Appendix~\ref{appendix helpful lemmas} we observed that $\ol{\mc A}= \mc A \cup D(\mc A)$.

First, we prove $\mc A \subseteq \mc Z$. If $e'$ is enumerated into $A$, then $e' = f_0(\gamma 1)$, for some $e$, with $\gamma$ of the form $0^e 1^j$, $e > 0$, $j \in N$. It follows that $A \subseteq Z$, so $\mc A \subseteq \mc Z$. 

Finally, we prove $D(\mc A) \subseteq \mc Z$. Let $E= \{0^\infty\} \cup \{0^e 1^\infty: e \in \N\}$. Clearly, $E \subseteq \mc Z$. Let $\mc A' = \{0^e 1^j 0^\infty: e,j > 0\}$. $\mc A \subseteq \mc A'$ so $D(\mc A) \subseteq D(\mc A')$. It is enough to show $D(\mc A') \subseteq E$. For a contradiction, let $X \in D(\mc A') \setminus E$. By the properties of $D(\cdot)$:
\begin{equation}
    \text{For all $n$, there are infinitely many elements of $\mc A'$ starting with $X \restriction n$.}\label{infinf}\end{equation}
 $X$ starts with $0$ because every element of $\mc A'$ starts with $0$. Since $X \neq 0^\infty$, $X$ starts with $0^k 1$ for some $k >0$.  Since $X \neq 0^k 1^\infty$, $X$ starts with $0^k1^j0$, for some $j > 0$. But $\mc A' \cap [0^k1^j 0] = \{0^k 1^j 0^\infty\}$ which is a finite set. This contradicts \eqref{infinf}. 
\end{proof}
\begin{lemma}
    No computable online learner for $\mc A$ is proper.
\end{lemma}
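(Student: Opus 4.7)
The plan is to invoke Theorem~\ref{thm:proper}, which for $RER$ classes reduces the task to showing that $\ol{\mc A}$ is not $RER$. Since $\mc A$ is $RER$ by Lemma~\ref{verifRER}, establishing that $\ol{\mc A}$ is not $RER$ will suffice to rule out any computable proper learner.

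To that end, I would focus on the index set $T = \{e \in \N : 0^e 1^\infty \in \ol{\mc A}\}$. Inspecting the construction, no element of $\mc A$ has the form $0^e 1^\infty$ (each added hypothesis ends in $0^\infty$), so $0^e 1^\infty$ lies in $\ol{\mc A}$ only as an accumulation point of $\mc A$, which happens iff the $R_e$-strategy enumerates infinitely many strings $0^e 1^j 0^\infty$ into $\mc A$---equivalently, iff step (3b) triggers infinitely often for $R_e$, iff $\varphi_e(0^e 1^j, e+j) \downarrow = 0$ for every $j \geq 1$. This exhibits $T$ as a $\Pi^0_2$-predicate on $e$.

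The main technical step, which I expect to be the heart of the argument, is showing that $T$ is $\Pi^0_2$-\emph{complete}. Given any $\Pi^0_2$-predicate $Q(n) = \forall j \exists i\, R(n,j,i)$ with $R$ computable, I would use the $s$-$m$-$n$ theorem to construct a total computable $g$ such that $\varphi_{g(n)}(S, x)$ behaves as follows: if $S$ is a sample of the form $0^m 1^j$ for some $m \geq 0$ and $j \geq 1$ with $x = m + j$, then $\varphi_{g(n)}(S, x)$ searches for an $i$ with $R(n, j, i)$, returning $0$ upon success and diverging otherwise; on all other inputs it returns $0$. Note that the recipe refers only to the parameter $n$ and the data $(m, j)$ decoded from the input, so no Recursion-Theoretic self-reference is needed. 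Then $\varphi_{g(n)}(0^{g(n)} 1^j, g(n)+j) = 0$ iff $\exists i\, R(n, j, i)$, whence $g(n) \in T \iff Q(n)$, witnessing $\Pi^0_2$-hardness of $T$.

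Finally, for the contradiction, I would observe that if $\ol{\mc A}$ were $RER$---enumerated by indices $h_0, h_1, \ldots$ of total computable functions---then $e \in T$ would be equivalent to the $\Sigma^0_2$-condition ``$\varphi_{h_n} = 0^e 1^\infty$ for some $n$'' (an existential quantifier over $n$ plus a $\Pi^0_1$ bit-by-bit equality check, which is decidable since each $\varphi_{h_n}$ is total). Since $\Pi^0_2$-complete sets lie strictly outside $\Sigma^0_2$, this would be impossible, so $\ol{\mc A}$ cannot be $RER$, and Theorem~\ref{thm:proper} finishes the proof.
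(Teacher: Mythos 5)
Your proof is correct, but it takes a genuinely different route from the paper's. The paper proves this lemma directly: it verifies that each requirement $R_e$ is satisfied by a case analysis on how the $R_e$-strategy interacts with $\varphi_e$ (if $\varphi_e$ diverges on some $\mc A$-realizable sample $0^e 1^j$, it is not a learner; if it predicts $1$ at stage $j$, the strategy becomes inactive so that no element of $\mc A$ is consistent with that prediction and $\varphi_e$ is not proper; and if it always predicts $0$, the strategy drives $\gamma_e$ through all $0^e 1^j$, forcing $\varphi_e$ to err infinitely often against the $\mc A$-realizable sequence $0^e 1^\infty$). You instead invoke Theorem~\ref{thm:proper} to reduce the claim to ``$\ol{\mc A}$ is not $RER$,'' and establish that by showing the index set $T = \{e : 0^e 1^\infty \in \ol{\mc A}\}$ is $\Pi^0_2$-complete via an $s$-$m$-$n$ reduction, while $RER$-ness of $\ol{\mc A}$ would place $T$ in $\Sigma^0_2$. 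Your reduction is sound: since the $R_e$-strategies act on disjoint cones $[0^e 1]$, membership of $0^{g(n)} 1^\infty$ in $\ol{\mc A}$ is governed solely by whether $\varphi_{g(n)}(0^{g(n)} 1^j, g(n)+j){\downarrow}=0$ for all $j\ge 1$, and your $g(n)$ (which makes no self-reference because its behavior depends only on $n$ and the decoded $(m,j)$) tracks exactly $\exists i\, R(n,j,i)$. The paper's proof is more elementary and self-contained, giving a stage-by-stage picture of why each candidate learner fails; yours extracts the stronger structural fact that $\ol{\mc A}$ is not $RER$ (indeed, that the relevant index set is $\Pi^0_2$-complete), which is a reusable statement and makes the invocation of Theorem~\ref{thm:proper} clean. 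Both ultimately rely on the same diagonalizing construction of $\mc A$; the difference is where the combinatorics is packaged.
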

\begin{proof}
We show that every requirement $R_e$ is satisfied. Assume that $\varphi_e$ is a learner and that $\varphi_e$ is proper with respect to $\mc A$. The are three cases to consider.
\begin{enumerate}
    \item The value of $\gamma_e$ becomes $0^e 1^j$, $j > 0$, and $\varphi_e(0^e 1^j, e + j) \uparrow$, that is the program $e$ on input $(0^e 1^j, e + j)$ never halts, which causes the $R_e$-strategy to execute step (2) indefinitely thereafter.

    We argue that this case is not possible. Suppose the contrary. Note that since $\gamma_e$ has become $0^e 1^j$, it must have gone through the previous values $0^e 1^k$, for $1 \leq k < j$. Therefore, for each previous value $0^e 1^k$ of $\gamma_e$, the $R_e$-strategy has executed step (b) at some point (otherwise, it would not have reach the current value $0^e 1^j$), and therefore $0^e 1^{k+1} 0^
    \infty$ was enumerated into $\mc A$, for $1 \leq k < j$. So $0^e1^j 0^\infty$ is in $\mc A$. It follows that $0^e1^j$ is $\mc A$-realizable. Since $\varphi_e$ is proper with respect to $\mc A$, $\varphi_e(0^e 1^j, e + j) \downarrow$, a contradiction.

    It follows that whatever value $\gamma_e$ assumes, we encounter a stage $s$ at which $\varphi_{e,s}(\gamma_e, |\gamma_e|)\downarrow$ and thus we execute step (3).
    
    \item The value of $\gamma_e$ becomes $0^e 1^j$, $j > 0$ and $\varphi_e(0^e 1^j, e + j) = 1$.

    We argue that this case is also not possible. Assume otherwise. By the reasoning above, $0^e 1^k 0^\infty \in \mc A$ for $ 1 \leq k \leq j$. Since $\varphi_e(0^e 1^j, e + j) = 1$, we finally execute step (a) while $\gamma_e =  0^e 1^j$, which makes the $R_e$-strategy \emph{inactive}. Afterwards, the $R_e$-strategy always exectues step (2) due to it being \emph{inactive}. Hence, we will never enumerate into $\mc A$ any hypothesis starting with $0^e 1^{j+1}$. But then $ \varphi_e(0^e 1^j, \cdot)$ is not $\mc A$-realizable, because for an $\mc A$-realizable sample $0^e 1^j$ and $x = e +j$, the learner $\varphi_e$ outputs $1$, yet no element of $\mc A$ is consistent with this prediction. Hence $\varphi_e$ is not proper---a contradiction.

    From this and the previous case, it follows that:
    \item For every value of $\gamma_e$, there is a stage at which the $R_e$-strategy executes step (b).

    It follows that $\gamma_e$ assumes, in order, all the possible values $0^e 1^j$ for all $j > 0$. This means that $r = 0^e 1^j 0^\infty \in \mc A$ for all $j > 0$. It follows that $0^e 1^\infty \in \ol{\mc A}$ and that $\varphi_e(0^e 1^j, e+j) = 0$, for all $j > 0$. By Lemma~\ref{small lemma}, the sequence $(x,r(x))_{x=0}^\infty$ is $\mc A$-realizable. But for every $n \geq e$, $\varphi_e$ makes a mistake on the sample $(x,r(x))_{x=0}^n$ and $x = n +1$. Hence, $\varphi_e$ does not learn $\mc A$. 
\end{enumerate}

\end{proof}

\section{A graph-coloring example}\label{secapp:natural example}

By Proposition~\ref{prop:RERuncomputable}, there exist $RER$ (and even $DR$) classes that are universally online learnable but have no computable learner. The proof of this result relies on the existence of computable trees of a certain kind. The notion of a computable tree is defined in Section~\ref{proof.RERuncomputable}. For brevity, and following standard usage in the literature, the set $[T]$ of all infinite paths through a computable tree $T$ is called a $\Pi^0_1$ class.

Although the class of hypotheses from Proposition~\ref{prop:RERuncomputable} may appear artificial, the existing literature on applications of $\Pi^0_1$ classes allows us to identify more natural examples. 
Below, we describe one such example---the problem of learning colorings of a computable graph.

Imagine that a graph is given with a fixed but unknown coloring, and we attempt to learn this coloring. Graph coloring can represent an existing allocation or strategy---such as frequency assignments in wireless networks or conflict-free strategy choices in game-theoretic settings. This learning scenario represents various reverse-engineering problems, in which one seeks to infer the internal logic or policy of a system from its observable outcomes. 

Before going into the specifics of our graph-coloring example, let us describe the key idea behind such applications. Given a computably presented problem, the corresponding set of solutions can be represented as a $\Pi^0_1$ class $[P]$. The correspondence between the elements of $[P]$ and the solutions to the problem is typically established via a natural coding of the latter by infinite sequences---paths through a computable tree $P$. As a result, coding-invariant properties that hold for all (some) $\Pi^0_1$ classes automatically extend to the corresponding sets of solutions. For a comprehensive survey of similar applications of $\Pi^0_1$ classes in logic, combinatorics, game theory, and analysis, see \citet{cenzer_chapter_1998}. 


    A computable graph $G=(V,E)$ consists of a computable set $V \subseteq \N$ of vertices and a computable edge relation $E \subseteq V \times V $. We consider only undirected graphs.  A $k$-coloring of $G$ is a map $g$ from $V$ to $\{1,2,\dots,k\}$ such that $g(u)\neq g(v)$ for any vertices $u,v$ with $E(u,v)$. 

    \paragraph{Learning $k$-colorings of a computable graph.}
        Let $G = (V,E)$ be a $k$-colorable computable graph, and let $\mc H_G$ be some set of $k$-colorings of $G$. In each round, Adversary gives a vertex $v \in V$, Learner outputs a color $\hat y \in \{1,2, \dots,k\}$, and then Adversary reveals the "true" color $y \in \{1,2, \dots,k\}$. 

    Below we present an example of the above learning problem: a computable graph $G$ and a set $\mc H_G$ of $k$-colorings of $G$, such that $\mc H_G$ is universally online learnable but cannot be learned by any computable learner. This example is corollary of the following result by \citet{remmel_graph_1986}.
    \begin{theorem}\label{thm:remmel86}
        There exists a $k$-colorable computable graph $G$ with a decidable $k$-coloring problem and a $k$-coloring $g$ such that $g$ is the unique uncomputable coloring of $G$.
    \end{theorem}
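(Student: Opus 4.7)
The plan is to split the proof into two stages: (i) construct a decidable tree $T \subseteq \{1,\dots,k\}^{<\omega}$ whose body $[T]$ contains exactly one uncomputable infinite path; (ii) encode $T$ as the set of $k$-colorings of a computable graph $G$ with a decidable $k$-coloring problem, so that the $k$-colorings of $G$ are in bijection with $[T]$.

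For stage (i), I would fix a computable approximation to a $\Pi^0_1$-singleton $g$ (a noncomputable sequence that is the unique infinite path of some computable binary tree, a standard object in computability theory). The tree $T$ would have $g$ as its \emph{main} branch; for every index $e$, a diagonalization ensures that no infinite path through $T$ other than $g$ eventually agrees with $\varphi_e$. This can be arranged by closing off, for each $e$, all branches that would continue to agree with $\varphi_e$ beyond the level at which the approximation to $g$ first disagrees with $\varphi_e$, while permitting a computable \emph{escape} branch that becomes eventually constant. To secure decidability of $T$ (not merely computable enumerability), the construction uses a computable modulus for the approximation to $g$, so that $\sigma \in T$ can be certified by a bounded search.

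For stage (ii), I would build $G$ level by level. At level $n$, introduce a $k$-clique $K_n = \{v_n^1,\dots,v_n^k\}$ forcing all $k$ colors to appear on $K_n$, together with an indicator vertex $u_n$ attached to $K_n$ so that its color encodes the $n$-th coordinate of the targeted path. For every minimal forbidden extension $\sigma \notin T$ with $\sigma \seg (|\sigma|-1) \in T$, attach a small gadget (standard in the $k$-colorability literature) to $u_0,\dots,u_{|\sigma|-1}$ that rules out precisely the joint coloring prescribed by $\sigma$. Since $T$ is decidable and the gadgets are uniformly producible, $G$ is computable, and the question of whether a finite partial coloring extends to a full $k$-coloring reduces to a bounded search through $T$, hence is decidable. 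The $k$-colorings of $G$ then biject with $[T]$, and the image of $g$ under this bijection is the unique uncomputable coloring.

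The main obstacle is stage (i): producing a \emph{decidable} tree whose body has a \emph{unique} uncomputable path. Typical $\Pi^0_1$-class constructions either produce many noncomputable paths (like the tree used in Proposition~\ref{prop:RERuncomputable}) or yield a tree that is only computably enumerable as a set of nodes. Achieving uniqueness and decidability simultaneously requires careful coordination between the diagonalization against the $\varphi_e$ and the modulus for $g$, and it is this delicate combinatorial core where Remmel's original argument does the real work. Stage (ii) is comparatively routine once suitable gadgets are fixed, but one must still verify that the gadgets do not introduce extraneous valid colorings outside the image of $[T]$.
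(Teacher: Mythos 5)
The paper does not prove this statement; it is cited verbatim from \citet{remmel_graph_1986} and used as a black box (the sentence immediately preceding it reads ``This example is corollary of the following result by Remmel''). So there is no internal proof to compare against, and your proposal has to stand on its own merits.

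Your two-stage architecture---first build a suitable tree, then encode it as the set of $k$-colorings of a computable graph via cliques and gadgets---is indeed the standard route, and stage (ii) is essentially routine. The gap is in stage (i). You propose to fix a $\Pi^0_1$-singleton $g$ and use ``a computable modulus for the approximation to $g$'' to certify membership in $T$ by a bounded search. This is self-defeating: if $g_s \to g$ is a computable approximation with a computable modulus $m$, then $g(n) = g_{m(n)}(n)$, so $g$ would be computable, contradicting the whole point. What the theorem actually needs is a tree $T$ that is not merely decidable as a set of strings but \emph{decidably pruned}: it must be decidable, uniformly in $\sigma$, whether $\sigma$ extends to an infinite path (this is exactly what ``decidable $k$-coloring problem'' translates to). For such a $T$, every isolated path is automatically computable, so $g$ must be a non-isolated point whose neighbours in $[T]$ are computable paths branching off arbitrarily late; producing this while keeping extendability decidable and $g$ uncomputable is a finite-injury-style construction that decides level-$n$ membership and extendability by a computable stage, without any appeal to a modulus for $g$. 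Your description of the diagonalization is also inverted: you want the paths \emph{other than} $g$ to eventually agree with some $\varphi_e$ (they have to be computable), and it is $g$ that must disagree with every $\varphi_e$. As you yourself acknowledge, this is the delicate core of Remmel's argument, and the proposal as written does not supply it.
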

 
    A graph $G$ has a \emph{decidable $k$-coloring problem} if there is a program that decides, for any $k$-coloring $f$ of a finite subgraph $F$ of $G$, whether $f$ is extendable to a $k$-coloring $g$ of $G$.
    
 Let $v_0,v_1,\dots$ be a 1-1 enumeration of $V$ and let $l \in \N \cup \{\omega\}$ be the length of this enumeration.  A $k$-coloring of $G$ may be represented as a sequence $X  \in \{1,2,\dots,k\}^l$ such that $X(i)\neq X(j)$ whenever $E(v_i,v_j)$. We let $G \seg n$ denote the induced subgraph of $G$ with vertex set $\{v_0, v_1, \dots,v_{n-1}\}$.
    
    \begin{lemma}Suppose $G$ is a computable $k$-colorable graph with a decidable $k$-coloring problem. Then there exists a program computing a function $c(f,n)$ with the following property: for any $n \in \N$ and any $f: \{0,1,\dots,n-1\} \to \{1,2,\dots,k\}$, if $f$ is a $k$-coloring of $G \seg n$ and $f$ is extendable to a $k$-coloring $g$ of $G$, then $c(f,\cdot)$ is a $k$-coloring of $G$ extending $f$.
\end{lemma}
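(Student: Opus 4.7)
The plan is to construct $c$ by an effective greedy extension that exploits the decidability of the $k$-coloring problem. For $m < n$, set $c(f,m) := f(m)$. For $m \geq n$, I would define $c(f,m)$ by recursion on $m$: having already computed the tuple $h := (c(f,0), c(f,1), \dots, c(f,m-1))$, let $c(f,m)$ be the least $j \in \{1, 2, \dots, k\}$ such that the extended assignment $h$ concatenated with $j$ is a $k$-coloring of $G \seg (m+1)$ which is still extendable to a full $k$-coloring of $G$. The test ``extendable to a full $k$-coloring'' is computable by the decidability assumption, so a bounded search over $j \in \{1, \dots, k\}$ is effective; combined with the (trivial) computation of the initial segment, the resulting function $c(f,m)$ is computable uniformly in $f$ and $m$.

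Next I would establish, by induction on $m \geq n$, the invariant that the finite coloring $(c(f,0), \dots, c(f,m-1))$ is a $k$-coloring of $G \seg m$ which is extendable to a full $k$-coloring of $G$. The base case $m = n$ is exactly the hypothesis on $f$. For the inductive step, assume the invariant holds at stage $m$, witnessed by some full $k$-coloring $g'$ extending the current partial coloring; then taking $j := g'(v_m)$ yields a legal extension that remains extendable (witnessed by $g'$ itself), so the search for $c(f,m)$ terminates with some $j \in \{1, \dots, k\}$, preserving the invariant at stage $m+1$.

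Finally, I would verify the two conclusions. The function $c(f,\cdot) : \N \to \{1, \dots, k\}$ extends $f$ by the definition of $c(f,m)$ for $m < n$. It is a $k$-coloring of $G$ because for any edge $E(v_i, v_j)$ with $i < j$, the partial coloring at the end of stage $j+1$ is, by the invariant, a $k$-coloring of $G \seg (j+1)$, forcing $c(f,i) \neq c(f,j)$.

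The argument is essentially bookkeeping and has no genuine obstacle: the one step that could fail---that the bounded search for $j$ always succeeds---is guaranteed by the inductive extendability invariant, which in turn is bootstrapped from the hypothesis on $f$. The decidability of the $k$-coloring problem does all the heavy lifting by replacing the non-effective ``choose any extendable color'' step with an effective ``find the least extendable color'' step.
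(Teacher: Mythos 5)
Your proof is correct and is precisely the argument the paper has in mind; the paper itself omits the proof with the remark that it is ``straightforward,'' and the greedy construction you give --- choosing at each vertex the least color that keeps the partial coloring extendable, with decidability of the $k$-coloring problem making this choice effective and the extendability invariant guaranteeing the search always succeeds --- is the natural realization of that claim.
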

The proof of the above lemma is straightforward. 

We now define our example of a graph-coloring problem.  Let $G_0 = (V_0,E_0)$ be a graph from Theorem~\ref{thm:remmel86}.
    
    \begin{definition}\label{graphexample}
    
        Let the class $\mc H_{G_0}$ consist precisely of those functions $c(f,\cdot)$ for which there exists $n \in \N$ such that $f$ represents a $k$-coloring of $G_0 \seg n$ that is extendable to a $k$-coloring of $G_0$.
    \end{definition} It can be easily shown that $\mc H_{G_0}$ is $RER$. 

    \begin{lemma}
    $\ol{\mc H_{G_0}}$ represents the set of all $k$-colorings of $G_0$.    
    \end{lemma}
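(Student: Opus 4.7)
The plan is to prove the two inclusions separately, using only the closure definition and the defining property of the function $c$.

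For the direction that every $k$-coloring of $G_0$ lies in $\ol{\mc H_{G_0}}$: given a $k$-coloring $g$ of $G_0$, I would for each $n \in \N$ let $f_n$ denote the restriction of $g$ to the vertices $v_0, v_1, \dots, v_{n-1}$. Since $g$ is a valid coloring of all of $G_0$, each $f_n$ is a $k$-coloring of $G_0 \seg n$ that is extendable to a $k$-coloring of $G_0$, namely $g$ itself. By Definition \ref{graphexample}, $h_n := c(f_n, \cdot)$ belongs to $\mc H_{G_0}$, and by the specification of $c$ (i.e., that $c(f_n,\cdot)$ extends $f_n$), we have $h_n(x) = g(x)$ for all $x < n$. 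The sequence $(h_n)_{n\in\N}$ together with the natural numbers $0 < 1 < 2 < \dots$ witnesses that $g \in \ol{\mc H_{G_0}}$.

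For the converse, suppose $g \in \ol{\mc H_{G_0}}$. To show that $g$ is a $k$-coloring of $G_0$, I would fix an arbitrary edge $E_0(v_a, v_b)$ of $G_0$. By the definition of closure, there exist $h_0, h_1, \dots$ in $\mc H_{G_0}$ and $n_0 < n_1 < \dots$ with $h_i(x) = g(x)$ for every $x < n_i$. I would then pick any $i$ with $n_i > \max(a,b)$. Because every member of $\mc H_{G_0}$ is of the form $c(f, \cdot)$ and such a function is, by its very specification, a $k$-coloring of the whole graph $G_0$, we have $h_i(a) \neq h_i(b)$. Since $g$ agrees with $h_i$ on these coordinates, $g(a) \neq g(b)$. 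Since the edge was arbitrary, $g$ is a $k$-coloring of $G_0$.

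The argument is routine and I anticipate no substantial obstacle. The only point worth being careful about is that the closure operator $\ol{\cdot}$ was introduced in the main text for $\{0,1\}$-valued functions, while here we are applying it to $\{1, 2, \dots, k\}$-valued functions; however, the defining condition refers only to pointwise agreement on initial segments of $\N$, so the definition and both directions of the argument transfer verbatim.
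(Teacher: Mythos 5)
Your proof is correct and takes essentially the same approach as the paper's: for $g \in \ol{\mc H_{G_0}}$ you use that every element of $\mc H_{G_0}$ is (by the specification of $c$) a proper $k$-coloring, so pointwise agreement on long initial segments forces $g$ to respect every edge; for the converse you observe that $c(g\seg n,\cdot)$ agrees with $g$ on the first $n$ coordinates and lies in $\mc H_{G_0}$, witnessing $g \in \ol{\mc H_{G_0}}$. The paper phrases the first inclusion by contradiction and you argue directly, but the content is identical; and your closing caveat about transferring the closure definition to $\{1,\dots,k\}$-valued functions is indeed addressed in the paper's final remark of that appendix.
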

    
     \begin{proof}If $g \in \ol{\mc H_{G_0}}$, then every initial segment $g \seg n$ represents a $k$-coloring of $G_0 \seg n$. If $g$ were not a $k$-coloring of $G_0$, then there would exist $i,j$ such that $g(i)=g(j)$ and $E_0(v_i,v_j)$; but then any sufficiently long initial segment of $g$ would not represent a $k$-coloring of the corresponding subgraph of $G_0$. 
     
     For the other direction, if $g$ is a $k$-coloring of $G_0$, then every initial segment $g \seg n$ is a $k$-coloring of $G_0 \seg n$ that can be extended to a $k$-coloring of $G_0$. Therefore, $c(g\restriction n, \cdot) \in \mc H_{G_0}$, which shows that $g \in \ol{\mc H_{G_0}}$.\end{proof}

 It follows immediately from Theorem~\ref{thm:remmel86} that the class $\ol {\mc H_{G_0}}$ is countable. By Lemma~\ref{fullrealizablecomp}, $\mc H_{G_0}$ is not computably universally online learnable because $\ol {\mc H_{G_0}}$ contains an uncomputable member. It remains to observe that $\mc H_{G_0}$ is universally online learnable. Since $\ol {\mc H_{G_0}}$ is countable, arrange its elements into a sequence $h_0, h_1, \dots$. A universal online learner for $\mc {H_{G_0}}$ works as follows: on input $(S,x)$, where $S$ is a sample obtained in previous stages of the game, output $h_i(x)$, where $i$ is the least $j$ such that $h_j$ is consistent with $S$. By Lemma~\ref{closure lemma}, any realizable sequence arising from Adversary's moves is consistent with some element of $\ol {\mc H_{G_0}}$, so the described learning strategy will eventually stop making mistakes.

We note that the previous paragraph requires reproving the lemmas from Section~\ref{sec:technical} for the topological space $\{1,2,\dots,k\}^\omega$ which is the product space of the discrete space $\{1,2,\dots,k\}$. The Cantor space $2^\omega$ can be obtained in a similar way as a countable product of the discrete space $\{0,1\}$. In general, the proofs of lemmas from Section \ref{sec:technical} depend on the properties of the underlying topological space, such as compactness and metrizability. It is known that a countable product of compact spaces is compact with respect to the product topology (Tychonoff's theorem). Moreover, there exists a metric such that the topology on $\{1,2,\dots,k\}^\omega$ induced by that metric coincides with the product topology \citep[see, e.g.,][Theorem~4.2.2., p.~259]{engelking_general_1989}. 

\end{document}